\documentclass{article} % For LaTeX2e
\usepackage{iclr2027_conference,times}
 
\usepackage{amsmath,amsfonts,bm}

\def\eqref#1{equation~\ref{#1}}
\def\1{\bm{1}}

\def\eps{{\epsilon}}

\DeclareMathAlphabet{\mathsfit}{\encodingdefault}{\sfdefault}{m}{sl}
\SetMathAlphabet{\mathsfit}{bold}{\encodingdefault}{\sfdefault}{bx}{n}

\newcommand{\E}{\mathbb{E}}

\usepackage{hyperref}
\usepackage{url}
\usepackage{amsmath, amssymb, amsthm}
\usepackage{mathtools}
\usepackage{booktabs}
\usepackage{graphicx}
\usepackage{xcolor}
\usepackage{enumitem}
\usepackage{cleveref}
\usepackage{multirow} 
\usepackage{algorithm}
\usepackage{algpseudocode}
\usepackage[most]{tcolorbox}
\usepackage{wrapfig}

\theoremstyle{definition}

\newtheorem{proposition}{Proposition}

\newtheorem{corollary}{Corollary}

\theoremstyle{definition}

\renewcommand{\eps}{\varepsilon}
\providecommand{\E}{\mathbb{E}}

\newcommand{\Dval}{D_{\mathrm{val}}}
\newcommand{\Fcal}{\mathcal{F}}
\newcommand{\ind}{\mathbf{1}}
\newcommand{\Wealth}{W}

\title{When Is Enough Enough in Self-Evolving LLM Systems?}

\author{
Enoch Yin$^{1}$ \qquad
Bin Liu$^{2,*}$ \qquad
Zhengling Qi$^{3,*}$ \\
\small $^{1}$Department of Statistics, The George Washington University \\
\small $^{2}$Department of Statistics and Data Science, School of Management, Fudan University \\
\small $^{3}$Department of Decision Sciences, School of Business, The George Washington University \\
\small $^{*}$Corresponding authors
}

\iclrfinalcopy
\begin{document}

\maketitle
\thispagestyle{plain}
\pagestyle{plain}

% =====================================================================
\begin{abstract}
Self-evolving large language model (LLM) systems repeatedly propose, evaluate, and incorporate updates to prompts, skills, or other persistent artifacts. Despite their growing effectiveness, these systems typically operate under a predetermined iteration or compute budget, without a principled criterion to determine when further evolution is no longer worthwhile. This can lead to two undesirable consequences: unnecessary computation after performance has saturated and the risk of returning late updates that overfit or exploit the evaluation signal. These issues motivate us to study two fundamental questions: \emph{when should a self-evolving system stop, and what should it output once it stops?} We address the first by formulating an online sequential testing problem and constructing an anytime-valid restart detector using the per-item paired evaluation outcomes already produced by self-evolving LLM systems. We address the second by formulating a change-point estimation problem and using the estimated transition to select an earlier artifact for output. The resulting procedure is plug-and-play and requires no modification of the underlying self-evolving algorithms. Across two self-evolving frameworks, three LLM model families, and five benchmarks, our method substantially reduces computation costs while maintaining comparable unseen-test performance. For example, on SearchQA with SkillOpt and DeepSeek V4 Flash, our method stops at round 4 rather than the full budget of 40, reducing token usage by 91.6\% while achieving 82.43\% unseen-test accuracy versus 82.00\% under the full-budget run.
\end{abstract}

% =====================================================================
\section{Introduction}
\label{sec:intro}
LLM agents are increasingly designed to improve themselves through repeated interaction, feedback, and adaptation rather than remain fixed after their initial construction. This emerging paradigm of \emph{self-evolving agents} has appeared in a wide range of settings, including iterative self-correction, self-training and self-play, prompt and skill evolution, memory and experience refinement, and multi-agent co-evolution~\citep{selfrefine, reflexion, ragen, absolutezero, webrl, agent0, multiagentevolve, skillopt, gepa, textgrad, trace2skill, autoskill, memskill, arise, skillflow, skill1}. Despite substantial differences in what is evolved and how updates are generated, these systems share a similar iterative structure: the system generates or applies an update, evaluates its effect using feedback, reward, or validation signals, and carries the resulting state into subsequent rounds. 

In practice, self-evolving loops are typically run for a fixed number of iterations or until a pre-specified compute budget is exhausted, regardless of whether further evolution remains meaningful. This can have two undesirable consequences. First, once meaningful improvement has saturated, the system may continue proposing, evaluating, and committing updates that provide little or no additional benefit, incurring substantial unnecessary computation. Second, repeatedly adapting to the same validation signal can encourage evaluation overfitting or reward hacking~\citep{amodei2016, skalse2022}, so later updates may appear favorable under the evaluation signal without representing genuine improvement. Figure~\ref{fig:teaser} illustrates how these two consequences can compound in practice: the evolution continues well after meaningful gains have saturated, consuming substantial additional computation, while continued adaptation can ultimately lead to a degraded final artifact on unseen test data.

This limitation raises two fundamental questions that every self-evolving system must answer:

\begin{tcolorbox}[
    colback=gray!4,
    colframe=gray!35,
    boxrule=0.6pt,
    arc=2pt,
    left=8pt,
    right=8pt,
    top=7pt,
    bottom=7pt
]
\centering
\textbf{(i) Given a fixed dataset, when should we stop running the self-evolving loop?}\\[4pt]
\textbf{(ii) Once we stop, which artifact/agent should we output?}
\end{tcolorbox}
The first question asks when there is sufficient evidence that further evolution is no longer worthwhile. The second is distinct: even after deciding to stop, the most recent artifact may not be the best one to return and one needs to decide what to output. 
Current self-evolving systems generally leave both decisions unaddressed in a fixed iteration budget.
\begin{figure}[t]
\centering
\includegraphics[width=0.88\textwidth]{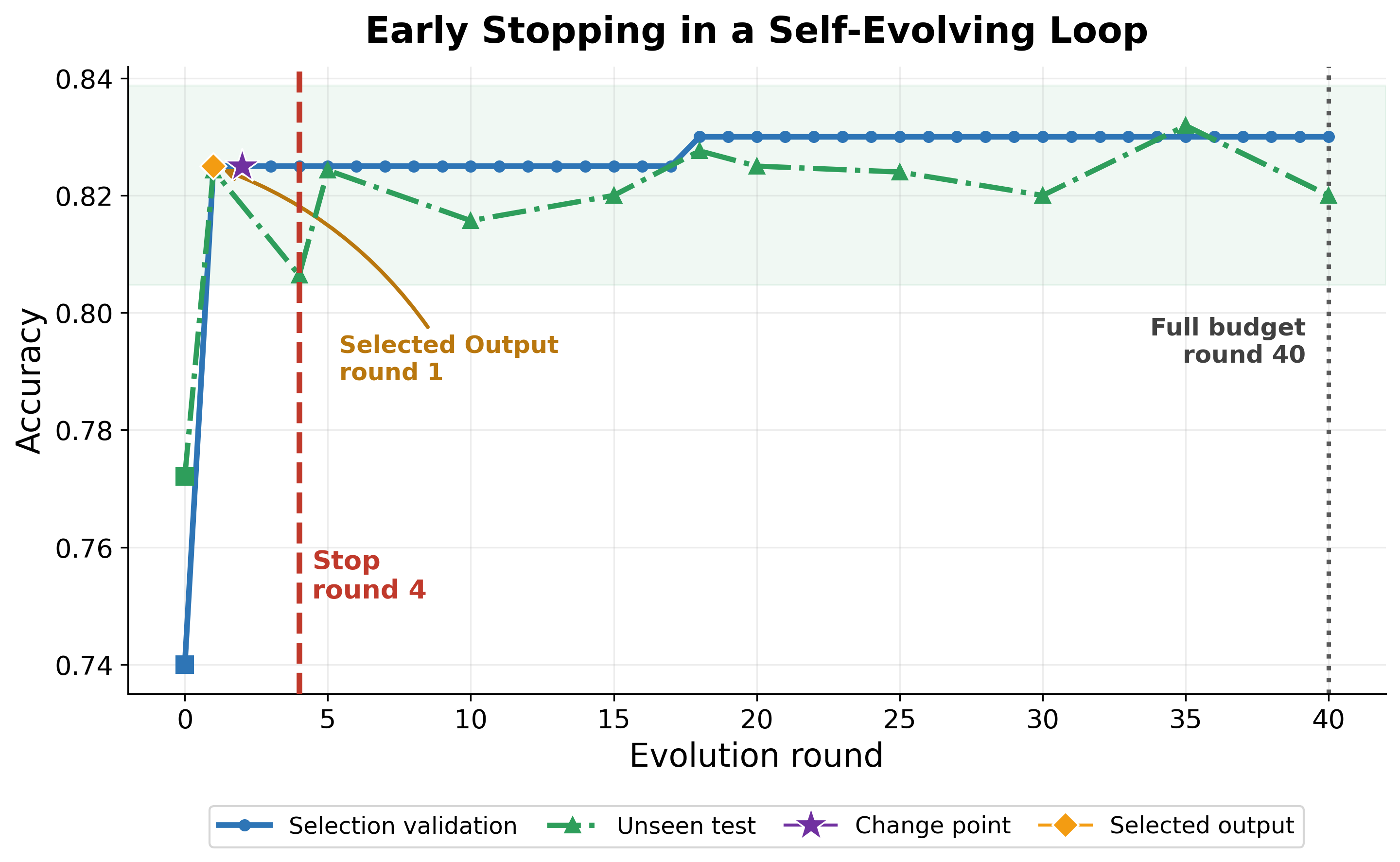}
\caption{A self-evolving run on SearchQA with DeepSeek V4 Flash. Performance improves during the first few rounds and then plateaus, yet the fixed-budget loop continues to round~40. Our procedure stops at round~4 and selects round~1 as the output, achieving comparable unseen-test performance while avoiding most of the subsequent computation.} 
\label{fig:teaser}
\end{figure}

We answer these two questions with a unified online procedure using the per-item paired outcomes already produced by the evaluation gate in most existing systems. For \emph{when to stop}, we formulate sequential hypothesis tests and construct an anytime-valid detector that signals when evidence for continued meaningful improvement has weakened, building on testing by betting~\citep{shafer2021}, betting martingales~\citep{waudbysmith2024}, and e-detectors~\citep{shin2023}. For \emph{what to output}, we formulate a change-point estimation problem~\citep{page1954, lorden1971}: after stopping, we estimate when meaningful improvement ended and return an earlier artifact. The procedure requires no modification to the underlying self-evolving system and uses only outcomes it already computes for incumbents and candidates. In \Cref{fig:teaser}, for example, it stops at round~4, estimates the change point at round~2, and returns an earlier artifact from round~1 with comparable unseen-test performance to the full 40-round run while using $91.6\%$ fewer tokens (\Cref{tab:main}). 

In detail, we make three main contributions. First, we identify \emph{when to stop} and \emph{what to output} as the two fundamental decisions a self-evolving loop must make, and show that both can be addressed online using the per-item paired outcomes already produced by the system, without modifying the underlying self-evolving algorithm (\Cref{sec:problem}). Second, we develop a unified statistical procedure that combines an anytime-valid sequential stopping rule with change-point-based output selection: the stopping rule determines when further improvement is no longer worthwhile, while the estimated change point determines which earlier artifact should be returned (\Cref{sec:proposed}). Third, we evaluate the procedure across two self-evolving frameworks, SkillOpt~\citep{skillopt} and GEPA~\citep{gepa}, three model families, and five benchmarks, showing substantial reductions in computation while maintaining comparable unseen-test performance (\Cref{sec:main}). In particular, in a benchmark where continued evolution eventually produces a reward-hacking update, our procedure stops before this late degraded artifact is reached and returns an earlier artifact instead. 
\section{Related work}
% ===============================================
\textbf{Self-evolving LLM systems.}
A rapidly growing body of work lets an LLM improve itself by repeatedly editing a persistent artifact and evaluating it on held-out data: skill documents \citep{skillopt, trace2skill, autoskill, memskill, evoskill, skillflow, skill1, arise}, prompts \citep{gepa, textgrad}, or programs. These systems differ in what they evolve, but share an analyze--propose--gate loop and report strong gains from it. Their reliability has two known failure points. The loop can be gamed: because it optimizes against its own evaluation signal, degenerate solutions that exploit a leaky benchmark score well without real capability. This is reward hacking, identified as a core safety problem \citep{amodei2016}, formalized by \citet{skalse2022}, catalogued across environments \citep{krakovna2020}, and shown to worsen with optimization pressure and agent capability \citep{pan2022, gao2023}; it is now documented in self-evolving agents \citep{shao2025}. And the per-round commit can be spurious, which \citet{pace} addresses by recasting the accept-or-reject gate as a sequential test. This line of work sharpens the \emph{individual steps} of the loop, its proposer and its gate, but takes the loop's overall length and its final output as given, leaving open when the loop should stop and which of its artifacts to return.

\textbf{Sequential change-point detection and optimal stopping.} Detecting online when a stream's distribution has changed is classical, from Wald's SPRT \citep{wald1945} and Page's CUSUM \citep{page1954} to the Shiryaev--Roberts procedure \citep{shiryaev1963, roberts1966} and its optimality theory \citep{lorden1971, pollak1985, lai1998, tartakovsky2014}. These classical procedures typically rely on specified parametric models for the pre- and post-change distributions, whereas our setting is defined through a conditional mean criterion with an unknown change time. We therefore build on anytime-valid inference \citep{shafer2021, ramdas2023savi}, specifically the betting martingales of \citet{waudbysmith2024} and the e-detectors of \citet{shin2023}, which provide sequential validity under optional stopping. In Appendix~\ref{app:related-work}, we review more related work on stopping rules for training and inference.
% =====================================================================
\section{Method}
\label{sec:method}
\subsection{Problem Formulation}
\label{sec:problem}

We consider a self-evolving system with initial artifact $\theta_0\in\Theta$, where $\Theta$ may contain prompts, skill documents, or programs. At each round $t\ge1$, the system uses its current incumbent $\theta_{t-1}$ and accumulated information to propose a candidate $\theta'_t$. The evolution mechanism may use training data, search, or reflection to generate a candidate $\theta'_t$. The incumbent and candidate are evaluated on a fixed validation set consisting of $n$ tasks: $\Dval=\{x_i\}_{i=1}^n$, with binary correctness $\ell(\theta,x_i)\in\{0,1\}$ and validation accuracy $L(\theta)=n^{-1}\sum_{i=1}^n\ell(\theta,x_i)$. A greedy gate sets $\theta_t=\theta'_t$ if $L(\theta'_t)>L(\theta_{t-1})$ and retains $\theta_t=\theta_{t-1}$ otherwise. Without a stopping rule, evolution continues until a prescribed round budget $N$ or compute budget is exhausted. One naive approach is to monitor $L(\theta_t)$ and detect the change point. However, tracking only $L(\theta_t)$ is insufficient because rejected proposals leave it unchanged, making a flat trajectory uninformative about proposal quality. For example, an apparent plateau may later be followed by further improvement, as in \Cref{fig:spreadsheet}. We therefore accumulate evidence from candidate--incumbent paired outcomes for sequential detection.

\begin{figure*}[t]
    \centering
    \includegraphics[width=\textwidth]{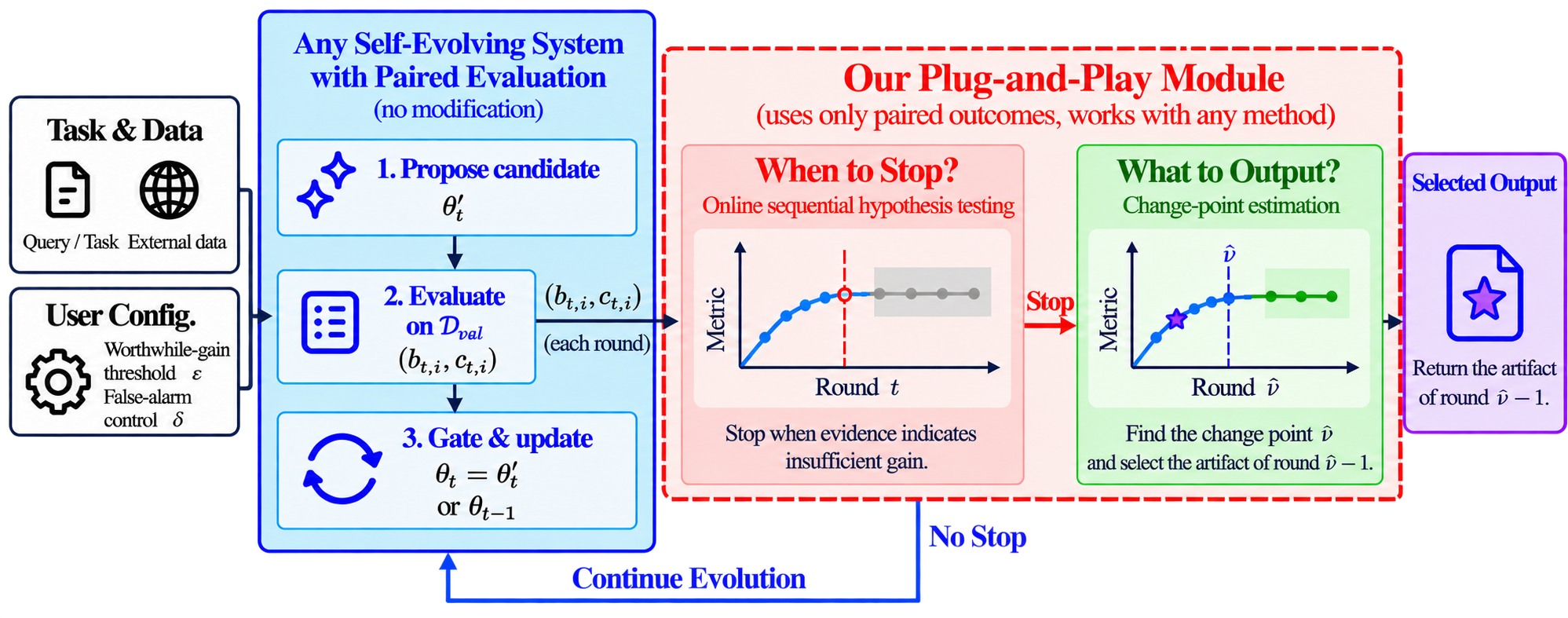}
    \caption{Our stopping and output-selection Plug-and-Play module monitors paired validation outcomes from a self-evolving loop.}
    \label{fig:pipeline}
\end{figure*}

To monitor whether proposed updates continue to provide worthwhile gains, let $b_{t,i}=\ell(\theta_{t-1},x_i)$ and $c_{t,i}=\ell(\theta'_t,x_i)$ denote the incumbent and candidate outcomes on validation task $x_i$. Define the paired differences and their round-wise average as
\begin{equation}
X_{t,i}=c_{t,i}-b_{t,i}\in\{-1,0,+1\},
\qquad \text{and} \qquad
Z_t=\frac{1}{n}\sum_{i=1}^n X_{t,i}
=L(\theta'_t)-L(\theta_{t-1}).
\end{equation}
Each difference $X_{t,i}$ records whether the candidate worsens, preserves, or improves an outcome, regardless of whether it is accepted. Let $\varepsilon>0$ denote the minimum expected proposal gain considered worthwhile. The gate accepts any positive observed gain, while stopping asks whether the conditional expected proposal gain remains at least $\varepsilon$. Thus, a candidate may be accepted even when its expected gain is too small to justify another round. Writing $\Fcal_{t-1}$ for the information available after the round-$t$ candidate is proposed but before its validation outcomes are observed, we define the pre-change condition as
\begin{equation}
\label{eq:null}
\E[Z_t\mid\Fcal_{t-1}]\ge\varepsilon.
\end{equation}
The round average measures the proposed validation gain, while the paired outcomes retain the item-level information used by our proposed sequential tests in \Cref{sec:proposed}.

We seek to determine both when to stop and which retained artifact to return. Let $\nu$ be the first round from which $\E[Z_t\mid\Fcal_{t-1}]<\varepsilon$, and assume this low-improvement regime persists once entered. Our first objective is to detect this transition online by accumulating evidence against the pre-change condition and stopping at an alarm time $T_{\mathrm{alarm}}$, defined by the sequential rule in \Cref{sec:proposed}. Because evidence takes time to accumulate, the alarm can follow the transition by several rounds. The pre-transition output is $\theta_{\nu-1}$, the final retained artifact before expected proposal improvement becomes non-worthwhile. Our second objective is therefore to estimate $\nu$ and return $\theta^\ast=\theta_{\nu-1}$, the incumbent immediately preceding the estimated transition. This separates the time spent gathering evidence from the round whose artifact is selected for output. \Cref{fig:pipeline} shows how our stopping and output-selection module interfaces with this loop.

\subsection{Proposed Method}
\label{sec:proposed}

We now develop a sequential procedure based on the paired improvement signals $\{X_{t,i}\}_{i=1}^n$ defined in \Cref{sec:problem}. We use these signals to address the two problems formulated above. First, we construct sequential tests of the pre-change condition from different starting rounds and combine their evidence to determine when evolution should stop. Second, we use the same restart statistics to estimate when the non-worthwhile-improvement regime began and determine which earlier incumbent should be returned.

\subsubsection{When to Stop: Sequential Testing with Restart Processes}
\label{sec:stopping}

A weak proposal can occur even while further evolution remains worthwhile. We therefore accumulate evidence against continued worthwhile improvement using successive paired outcomes defined before. Following \citet{shin2023}, for each starting round $s$, we test $H_0^{(s)}$: the pre-change condition in \Cref{eq:null} holds for every subsequent round $u\ge s$.

We represent this evidence by a restart wealth process with betting fractions $\lambda_{s,u}$,
\begin{equation}
\label{eq:wealth}
W_t^{(s)}
=
\prod_{u=s}^{t}
\prod_{i=1}^{n}
\left[
1+\lambda_{s,u}\left(\varepsilon-X_{u,i}\right)
\right],
\qquad \text{with} \qquad
W_{s-1}^{(s)}=1.
\end{equation}
For positive betting fractions, an outcome $X_{u,i}<\varepsilon$ contributes a factor greater than one, while an outcome above $\varepsilon$ contributes a factor below one. Multiplication combines these contributions across items and rounds, so a large wealth represents accumulated evidence against $H_0^{(s)}$.

We use two betting schemes for $\lambda_{s,u}$. Under adaptive aGRAPA~\citep{waudbysmith2024}, $\lambda_{s,t}$ is chosen predictably from past data. See Appendix~\ref{app:betting} for more details. Under the fixed
construction, we set $\lambda_{s,t}\equiv\lambda$ for a constant $\lambda\in\Lambda$, and form a fixed mixture by averaging the resulting wealth processes over $\lambda\in\Lambda$. We use $\Lambda=\{0.1,0.2,0.4\}$. In the main paper, we report experimental results under adaptive aGRAPA and  we evaluate both schemes throughout our experiments in Appendix~\ref{app:full-alarms}. 

A process started during useful early rounds may lose wealth and then need to recover those losses before detecting a later stall. We therefore start a fresh process at every round, allowing evidence after a transition to accumulate without losses from earlier gains. Following the CUSUM-style e-detector construction of \citet{shin2023}, we retain the strongest current evidence across starting rounds and raise an alarm when it crosses $1/\delta$, for a user-specified $\delta\in(0,1)$:
\begin{align}
\label{eq:restart}
M_t &= \max_{1\le s\le t}W_t^{(s)}, \qquad \text{and} \\
\label{eq:alarm}
T_{\mathrm{alarm}} &= \inf\left\{t\ge1:M_t\ge\frac{1}{\delta}\right\}.
\end{align}
The threshold determines how much evidence is required to stop: smaller $\delta$ requires larger wealth and makes the rule more conservative. Under the pre-change condition and the assumptions in \Cref{sec:theory}, $M_t$ is a valid e-detector, so this threshold guarantees an average run length to false alarm of at least $1/\delta$.

\begin{algorithm}[t]
\caption{Stopping and output selection for a self-evolving loop}
\label{alg:detector}
\begin{algorithmic}[1]
\Require Worthwhile-gain threshold $\varepsilon$, false-alarm control parameter $\delta$.
\State Initialize restart wealth processes.
\For{$t=1,2,\dots, N$}
    \State Observe $(b_{t,i},c_{t,i})_{i=1}^{n}$ and set $X_{t,i}=c_{t,i}-b_{t,i}$.
    \State Initialize a new restart at $s=t$ and update all wealths with aGRAPA or the fixed mixture.
    \State $M_t\gets\max_{1\le s\le t}W_t^{(s)}$.
    \If{$M_t\ge1/\delta$}
        \State $T_{\mathrm{alarm}}\gets t$.
        \State $\hat{\nu}\gets\operatorname*{arg\,max}_{1\le s\le t}W_t^{(s)}$.
        \State \textbf{stop and return} $\theta_{\hat{\nu}-1}$.
    \EndIf
\EndFor
\State If no alarm is raised, the stopping module remains inactive and the underlying system returns its final artifact.
\end{algorithmic}
\end{algorithm}

\subsubsection{What to Output: Change-Point Estimation}
As discussed above, restarting the sequential evidence process at each round helps the procedure respond more quickly once improvement becomes non-worthwhile. The resulting collection of restart processes also provides a natural way to locate when this transition occurred. A process started too early includes pre-transition observations that tend to work against wealth growth, whereas a process started too late misses part of the evidence accumulated after the transition. This suggests that a process started near the transition should attain relatively large wealth at the alarm time. We therefore estimate the change point as the starting round of the restart process with the largest observed wealth at $T_{\mathrm{alarm}}$:
\begin{equation}
\label{eq:changepoint}
\hat{\nu}
=
\operatorname*{arg\,max}_{1\le s\le T_{\mathrm{alarm}}}
W_{T_{\mathrm{alarm}}}^{(s)}.
\end{equation}
As shown in \Cref{sec:theory}, this intuition can be formalized in the common fixed-$\lambda$ setting. We then use the estimated change point $\hat{\nu}$ to determine the returned artifact according to $\theta^\ast$. Thus, the same restart construction is used both to stop the evolution and to select the output artifact: $T_{\mathrm{alarm}}$ determines when to stop, while $\hat{\nu}$ determines which artifact to return.

Algorithm~\ref{alg:detector} summarizes the complete stopping and output-selection procedure.

\subsection{Theoretical Analysis}
\label{sec:theory}

We establish stopping validity and change-point guarantees below. All proofs are in Appendix~\ref{app:proofs}.

\textbf{Stopping validity.} Recall that $\Fcal_{t-1}$ includes the proposed candidate but not its validation outcomes. Assume that $X_{t,1},\ldots,X_{t,n}$ are conditionally independent given $\Fcal_{t-1}$, and each betting fraction $\lambda_{s,t}\in[0,\tfrac12]$ is predictable and fixed across items within round $t$.

\begin{proposition}[Restart and stopping validity]
\label{prop:validity}
\label{prop:arl}
Under these assumptions, for each start $s$, $H_0^{(s)}$ implies that $W_t^{(s)}$ is a nonnegative supermartingale initialized at one and $
\Pr(\sup_{t\ge s}W_t^{(s)}\ge 1/\delta)\le \delta$
for $\delta\in(0, 1)$. If the pre-change condition holds at every round, then $M_t$ is a valid e-detector: $\E[M_\tau]\le\E[\tau]$ for every stopping time $\tau$. Consequently, $\E[T_{\mathrm{alarm}}]\ge\frac{1}{\delta}.$
\end{proposition}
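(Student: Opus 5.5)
The proof has three parts: a one-step supermartingale bound, Ville's inequality, and the standard CUSUM-type e-detector argument of \citet{shin2023}.

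\textbf{Step 1: supermartingale property.} Fix $s$ and $t\ge s$, and write the one-round increment as
\[
R_t=\prod_{i=1}^n\left[1+\lambda_{s,t}(\varepsilon-X_{t,i})\right].
\]
Each factor is nonnegative. Indeed, $X_{t,i}\le 1$ and $\lambda_{s,t}\le\tfrac12$ give $1+\lambda_{s,t}(\varepsilon-X_{t,i})\ge 1-\lambda_{s,t}(1-\varepsilon)\ge 0$ whenever $\varepsilon\le 1$. (If $\varepsilon>1$ the pre-change condition is vacuous, since $Z_t\le 1$.) Since $\lambda_{s,t}$ is $\Fcal_{t-1}$-measurable and common across items, conditional independence factorizes the expectation:
\[
\E[R_t\mid\Fcal_{t-1}]=\prod_{i=1}^n\left(1+\lambda_{s,t}\bigl(\varepsilon-\E[X_{t,i}\mid\Fcal_{t-1}]\bigr)\right).
\]
Set $a_i=\varepsilon-\E[X_{t,i}\mid\Fcal_{t-1}]$. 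Under $H_0^{(s)}$ we have $\sum_i a_i\le 0$, but individual $a_i$ may be positive.

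\textbf{The main obstacle} is to show the product $\prod_i(1+\lambda a_i)$ is at most $1$ when only the sum satisfies $\sum_i a_i\le 0$. AM–GM handles it. Each $1+\lambda a_i\ge 0$, so
\[
\prod_i(1+\lambda a_i)\le\Bigl(1+\tfrac{\lambda}{n}\textstyle\sum_i a_i\Bigr)^n\le 1.
\]
This is exactly where fixing $\lambda$ across items within a round is needed. It follows that $\E[W_t^{(s)}\mid\Fcal_{t-1}]\le W_{t-1}^{(s)}$ with $W_{s-1}^{(s)}=1$. Ville's inequality for nonnegative supermartingales then yields $\Pr(\sup_{t\ge s}W_t^{(s)}\ge 1/\delta)\le\delta$. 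The fixed mixture over $\Lambda$ is an average of such supermartingales, so it is again one.

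\textbf{Step 2: e-detector.} Assume the pre-change condition holds at every round. Then every $W^{(s)}$ is a supermartingale started at $1$ at time $s-1$. Bound
\[
M_t\le\sum_{s=1}^t W_t^{(s)},
\]
and for a stopping time $\tau$ write
\[
\sum_{s\le\tau}W_\tau^{(s)}=\sum_{s\ge1}W_{\tau}^{(s)}\mathbf{1}\{\tau\ge s\}.
\]
The event $\{\tau\ge s\}$ is $\Fcal_{s-1}$-measurable. By optional stopping applied to $W^{(s)}$ from time $s-1$ (with $\tau$ bounded first, then monotone convergence, or Fatou for the nonnegative supermartingale), we get $\E[W^{(s)}_\tau\mathbf{1}\{\tau\ge s\}]\le\Pr(\tau\ge s)$. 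Summing over $s$ gives $\E[M_\tau]\le\sum_s\Pr(\tau\ge s)=\E[\tau]$, which is the valid e-detector property.

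\textbf{Step 3: ARL bound.} If $\E[T_{\mathrm{alarm}}]=\infty$ there is nothing to prove. Otherwise take $\tau=T_{\mathrm{alarm}}$. On $\{T_{\mathrm{alarm}}<\infty\}$ we have $M_{T_{\mathrm{alarm}}}\ge 1/\delta$, and this event has probability one because the expectation is finite. Hence $1/\delta\le\E[M_{T_{\mathrm{alarm}}}]\le\E[T_{\mathrm{alarm}}]$. This is the argument of \citet{shin2023}; applying it to $T_{\mathrm{alarm}}\wedge m$ and letting $m\to\infty$ handles the integrability details.
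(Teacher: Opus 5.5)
Your proposal is correct and follows the paper's proof essentially step for step: nonnegativity of the factors, conditional factorization plus AM--GM to obtain $\E[W_t^{(s)}\mid\Fcal_{t-1}]\le W_{t-1}^{(s)}$, Ville's inequality, the bound $M_\tau\le\sum_{s\le\tau}W_\tau^{(s)}$ with $\{\tau\ge s\}\in\Fcal_{s-1}$ and optional stopping, and finally applying the e-detector inequality at $\tau=T_{\mathrm{alarm}}$. Your extra care with unbounded $\tau$ (truncation with Fatou, and the $T_{\mathrm{alarm}}\wedge m$ limit) fills in integrability details the paper leaves implicit but does not change the argument.
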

Proposition \ref{prop:arl} covers adaptive betting including aGRAPA, while fixed mixtures inherit the same validity by convexity, with average time to false alarm at least $1/\delta$. Intuitively, the detector stops only when enough evidence accumulates that further proposal gains are no longer worthwhile. A smaller $\delta$ raises the alarm threshold, increasing the lower bound on $\E[T_{\mathrm{alarm}}]$ under the pre-change regime and lowering the probability of making the false alarm.

\textbf{Change-point identification and localization.}
For this analysis, assume one common fixed betting fraction $\lambda\in(0,\tfrac12]$ across all rounds and restarts. Define
\begin{equation}
\label{eq:normalized-log-growth}
V_{t,i}=\log\!\left(1+\lambda(\varepsilon-X_{t,i})\right),
\qquad
Y_t=\frac1n\sum_{i=1}^n V_{t,i}, \qquad \text{and} \qquad \mu_t=\E[Y_t\mid\Fcal_{t-1}].
\end{equation}

\begin{proposition}[Change-point identification and localization]
\label{prop:changepoint}
\label{prop:changepoint-error}
Assume that (a) $\nu$ marks the sign change in fixed-$\lambda$ expected log growth: $\mu_t<0$ for $t<\nu$ and $\mu_t>0$ for $t\ge\nu$; Then, for any fixed $T\ge\nu$,
\[
\nu=\operatorname*{arg\,max}_{1\le s\le T}\E[\log W_T^{(s)}].
\]
 For localization, further assume that (b) $\mu_t\le-\kappa$ before $\nu$ and $\mu_t\ge\kappa$ from $\nu$ onward for some $\kappa>0$; (c) Conditioning on $\Fcal_{t-1}$, the $V_{t,i}$ are independent and each centered increment $V_{t,i}-\E[V_{t,i}\mid\Fcal_{t-1}]$ is $\sigma^2$-sub-Gaussian ($\sigma>0$). 
 Then, for any $\alpha\in(0,1)$, with probability at least $1-\alpha$, the bound holds uniformly over all finite $T\ge\nu$, including $T_{\mathrm{alarm}}$ on $\{\nu\le T_{\mathrm{alarm}}<\infty\}$:
\[
|\hat\nu_T-\nu|
<\frac{2\sigma^2}{n\kappa^2}
\log\!\left(
\frac{2}{\alpha\left(1-e^{-n\kappa^2/(2\sigma^2)}\right)}
\right).
\]
\end{proposition}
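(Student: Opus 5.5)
The argument rests on one identity. With a common fixed $\lambda$,
\[
\log W_T^{(s)}=\sum_{u=s}^{T}\sum_{i=1}^n V_{u,i}=n\sum_{u=s}^{T}Y_u ,
\]
so for $T\ge\nu$ the restart statistics are reverse partial sums of the $Y_u$. \emph{Identification} is then a deterministic comparison of expected partial sums, and \emph{localization} becomes a uniform deviation bound for a random walk with drift $\pm n\kappa$ on either side of $\nu$.

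\textbf{Step 1 (identification).} By the tower property, $\E[\log W_T^{(s)}]=n\sum_{u=s}^T \E[\mu_u]$. Write $g(s)=\E[\log W_T^{(s)}]$.
\begin{itemize}
\item For $s<\nu$: $g(s)-g(\nu)=n\sum_{u=s}^{\nu-1}\E[\mu_u]<0$, because $\mu_u<0$ almost surely for $u<\nu$.
\item For $\nu<s\le T$: $g(\nu)-g(s)=n\sum_{u=\nu}^{s-1}\E[\mu_u]>0$, because $\mu_u>0$ for $u\ge\nu$.
\end{itemize}
Hence $\nu$ is the unique maximizer over $1\le s\le T$. I will treat $\nu$ as deterministic, as the statement implicitly does, and need integrability of $V$, which holds since $V$ is bounded for $\lambda\le\tfrac12$.

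\textbf{Step 2 (reduction of the localization event).} Set $D_u=Y_u-\mu_u$, a martingale difference sequence. By (c), conditional independence across items makes $D_u$ conditionally $\sigma^2/n$-sub-Gaussian. Suppose $\hat\nu_T=s\ne\nu$.
\begin{itemize}
\item If $s=\nu-k$ with $k\ge1$: we need $\sum_{u=\nu-k}^{\nu-1}Y_u\ge0$, hence $\sum_{u=\nu-k}^{\nu-1}D_u\ge k\kappa$ by (b).
\item If $s=\nu+k$ with $k\ge1$ and $s\le T$: we need $\sum_{u=\nu}^{\nu+k-1}Y_u\le 0$, hence $\sum_{u=\nu}^{\nu+k-1}D_u\le -k\kappa$.
\end{itemize}
The left-side event does not involve $T$ at all, and the right-side event is over a forward block starting at $\nu$ that does not depend on $T$. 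So a single bad event $B_K$ covers every $T\ge\nu$ simultaneously, including random $T=T_{\mathrm{alarm}}$ on $\{\nu\le T_{\mathrm{alarm}}<\infty\}$. Here $B_K$ is the event that one of these deviations occurs for some $k\ge K$, and $|\hat\nu_T-\nu|\ge K$ implies $B_K$.

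\textbf{Step 3 (probability bound).} For a fixed $k$, a Chernoff bound on the sum of $k$ conditionally sub-Gaussian martingale differences (iterating conditional MGF bounds) gives
\[
\Pr\Big(\sum D\ge k\kappa\Big)\le \exp\!\big(-nk\kappa^2/(2\sigma^2)\big).
\]
The backward block needs care: it runs forward in time from $\nu-k$, so the martingale MGF argument still applies because $\nu$ is fixed. Let $r=e^{-n\kappa^2/(2\sigma^2)}$. Union-bounding over both sides and over $k\ge K$ gives
\[
\Pr(B_K)\le \frac{2r^K}{1-r}.
\]
Set this to $\alpha$ and choose $K$ as the smallest integer with $2r^K/(1-r)\le\alpha$. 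Then $K\le \frac{2\sigma^2}{n\kappa^2}\log\frac{2}{\alpha(1-r)}+1$, and on $B_K^c$ we get $|\hat\nu_T-\nu|\le K-1$, which is strictly less than the displayed bound.

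\textbf{Main obstacle.} The delicate point is the ceiling bookkeeping: getting a strict inequality against the non-integer threshold without losing a constant, using $|\hat\nu-\nu|<K$ with $K$ chosen via the ceiling. A secondary issue is justifying uniformity over the random $T_{\mathrm{alarm}}$. I will handle this by the $T$-free form of the events in Step 2, rather than by any optional stopping argument.
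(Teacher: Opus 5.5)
Your proposal is correct and follows the paper's proof step for step. Identification is the tower-property comparison of expected reverse partial sums. For localization, you reduce $\{W_T^{(s)}\ge W_T^{(\nu)}\}$ to a one-sided deviation of $k$ conditionally $\sigma^2/n$-sub-Gaussian martingale differences, then apply a union bound summed as a geometric series, and finish with the ceiling bookkeeping.

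Your explicit remark that these bad events do not depend on $T$ is exactly what justifies the claimed uniformity over all $T\ge\nu$, including $T_{\mathrm{alarm}}$. The paper leaves this implicit by summing $k$ to infinity. One small correction: the strict inequality comes from the minimality of $K$, not from $K\le r_\alpha+1$. Here $r_\alpha$ is the displayed bound, and minimality means $K-1$ fails the condition, so $K-1<r_\alpha$.
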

Proposition \ref{prop:changepoint-error} justifies selecting the largest observed restart wealth for localizing the change point. It further shows that for fixed $\kappa$ and $\sigma$, our change point estimation error diminishes at the rate of $1/n$.
%=====================================================================
\section{Experiments}
\label{sec:experiments}
\subsection{Setup}
\label{sec:setup}

We evaluate our method with SkillOpt and GEPA across DeepSeek V4 Flash, Qwen3-32B, and GPT-5.6 Luna, using SearchQA, GSM8K, OfficeQA, LiveMath, and SpreadsheetBench. Main experiments fix SkillOpt and DeepSeek V4 Flash while varying the benchmark; ablations vary the model, evolving system, validation-set size, and detector construction.

Unless otherwise noted, we use $\varepsilon=0.01$ and false-alarm control parameter $\delta=0.05$; $t=0$ denotes the initial artifact. For $\lambda_{s,u}$ in the construction of the wealth process, we run both aGRAPA and the fixed-mixture detector with $\Lambda=\{0.1,0.2,0.4\}$. Since the two betting schemes produce the same estimated change point $\hat\nu$ across matched settings, with differences appearing only in the alarm round, we report aGRAPA in the main results for conciseness and provide the comparison in Appendix~\ref{app:full-alarms}. We report the alarm round $T_{\mathrm{alarm}}$, estimated change point $\hat\nu$, computation consumed through the alarm relative to the full run, and returned artifact $\theta_{\hat\nu-1}$. When an unseen test set is available, we compare the returned artifact with the one selected by the full run on the same examples, reporting their accuracies, paired accuracy difference, and its 95\% confidence interval.

\subsection{Main Results}
\label{sec:main}
\textbf{Our method stops substantially before the fixed budget.} The detector saves 48.4\%--91.6\% of tokens across the three main benchmarks (\Cref{tab:main}). The evolution trajectories and detailed stopping behavior are provided in Appendix~\ref{app:main-trajectories}.

\begin{wrapfigure}{r}{0.48\textwidth}
\centering
\includegraphics[width=\linewidth]{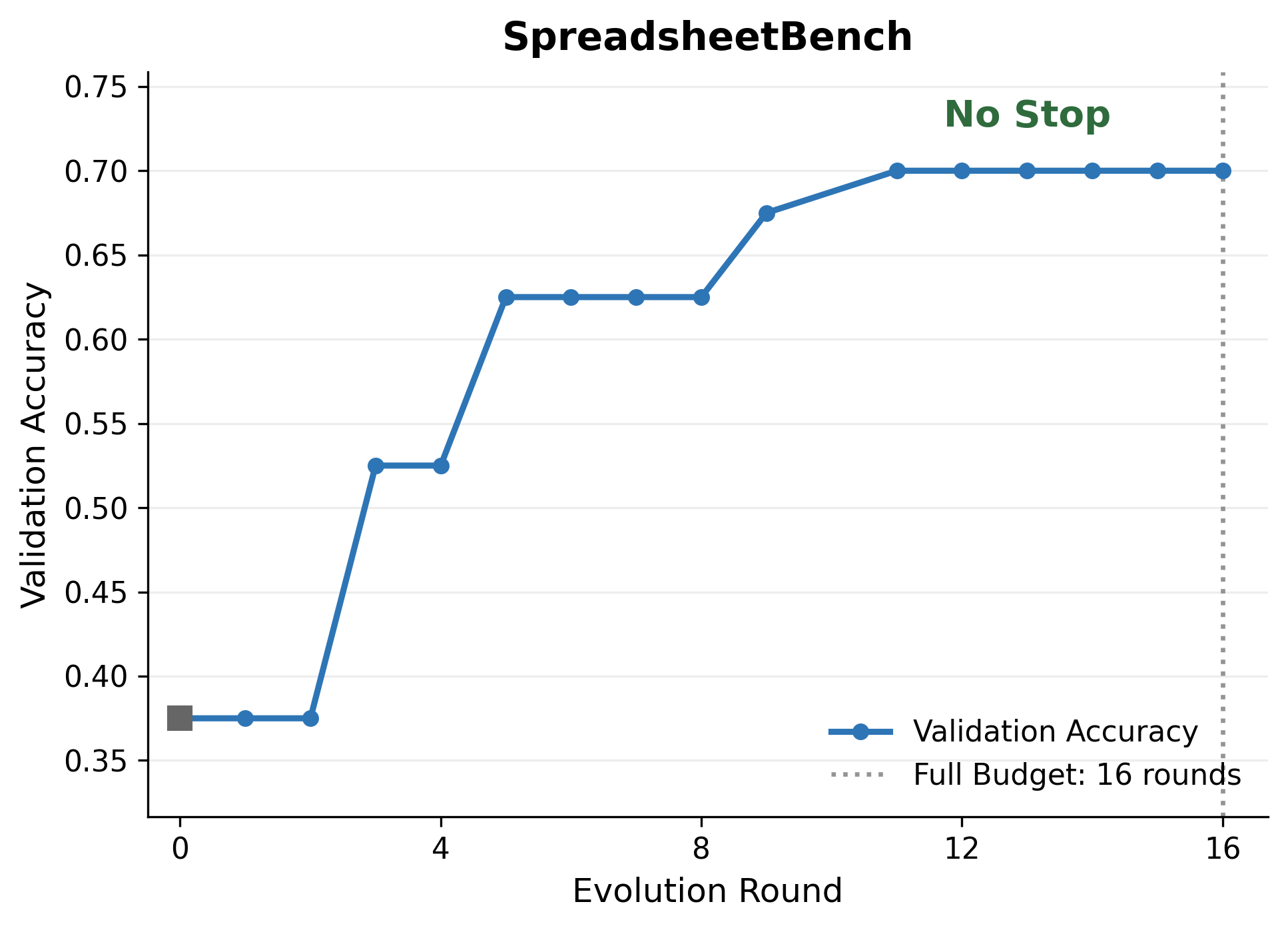}
\caption{SpreadsheetBench trajectory under SkillOpt with DeepSeek V4 Flash.}
\label{fig:spreadsheet}
\end{wrapfigure}
\textbf{Early stopping retains comparable unseen-test performance while improving token efficiency.} After the detector stops, our method uses the estimated change point $\hat{\nu}$ to return the incumbent immediately preceding the estimated transition, $\theta_{\hat{\nu}-1}$. We compare this returned artifact with the artifact selected by the full run on the same unseen-test examples. The paired accuracy difference is defined as $\Delta_{\mathrm{acc}}=\mathrm{Acc}_{\mathrm{returned}}-\mathrm{Acc}_{\mathrm{full}}$, and we report its 95\% confidence interval. The returned artifact slightly outperforms the full-run selection on SearchQA ($+0.43$ pp), while the differences on GSM8K and OfficeQA are $-0.80$ pp and $-1.16$ pp, respectively. In all three cases, the confidence interval includes zero, and the observed differences are small relative to their uncertainty, indicating no clear loss in unseen-test performance from stopping early. At the same time, the reduction in computation substantially improves the cost--performance tradeoff. Measured as millions of tokens per percentage point of unseen-test accuracy gained over the baseline, the returned artifacts are $12.9\times$, $9.3\times$, and $1.8\times$ more token-efficient than running the full evolution budget on SearchQA, GSM8K, and OfficeQA, respectively. The gains are largest on SearchQA and GSM8K, where most of the unseen-test improvement occurs early. Overall, the stopping rule substantially reduces evolution cost while maintaining comparable unseen-test performance in our experiments.

\begin{wrapfigure}{r}{0.48\textwidth}
\centering
\includegraphics[width=\linewidth]{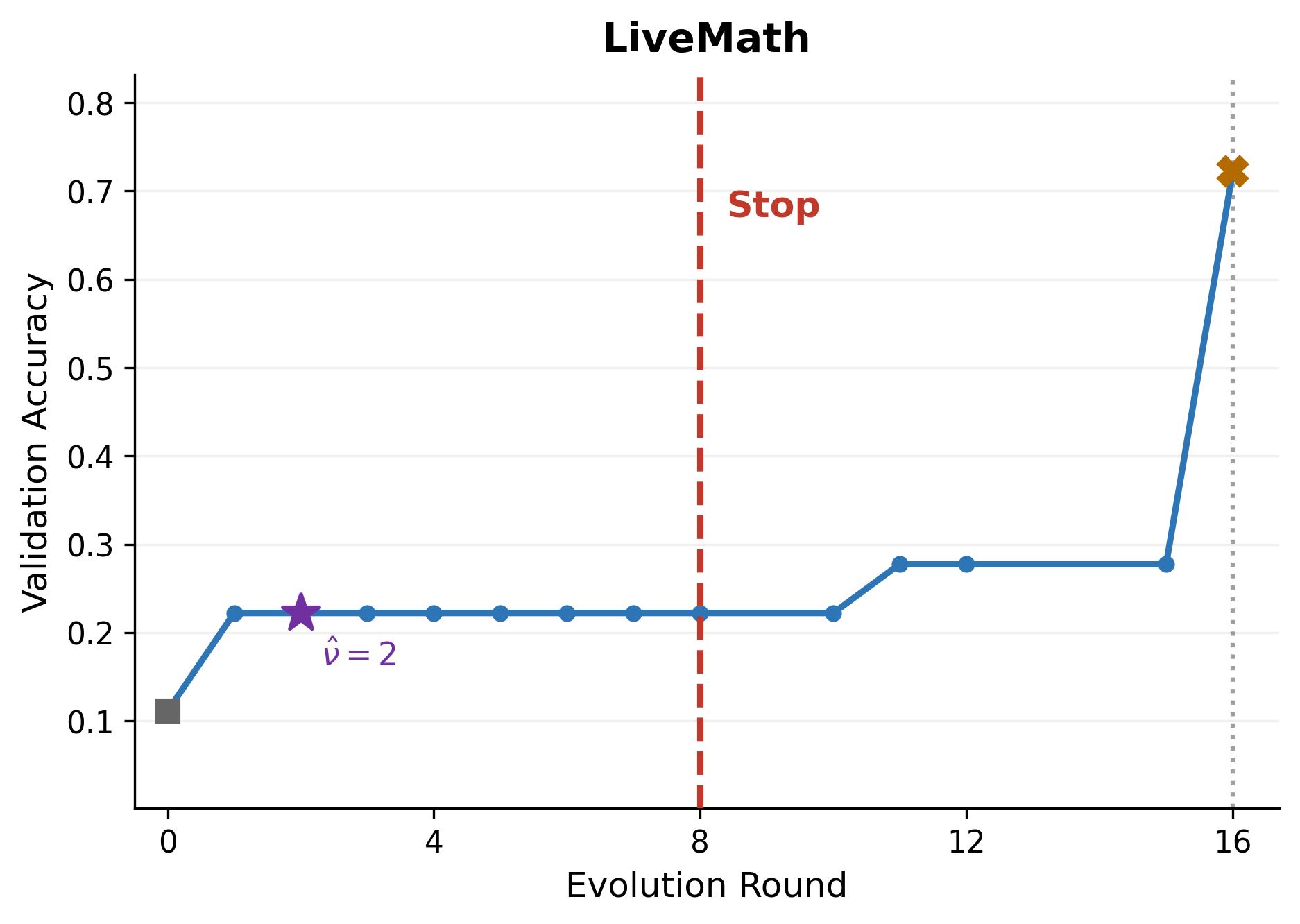}
\caption{LiveMath trajectory under SkillOpt with DeepSeek V4 Flash.}
\label{fig:livemath}
\end{wrapfigure}

\textbf{The detector stays silent when improvement continues.} A useful stopping rule should remain inactive while meaningful improvement continues. SpreadsheetBench serves as a negative control: its validation accuracy rises from $0.375$ to $0.70$ over~$16$ rounds, with further gains as late as round~$11$ (\Cref{fig:spreadsheet}). Under the reported aGRAPA configuration ($\varepsilon=0.01$, $\delta=0.05$), no alarm is raised within the observed horizon. This complements the main results by showing that the detector remains silent on a trajectory that continues to improve.

\textbf{Early stopping avoids a degraded late artifact. }LiveMath further illustrates the value of combining principled stopping with change-point-based output selection. As shown in \Cref{fig:livemath}, SkillOpt makes an early genuine improvement and then enters a long low-improvement regime. If the system were allowed to run for the full budget, it would eventually reach round~16, where it accepts a degenerate skill that instructs the model to answer ``A'' without reasoning, exploiting the benchmark's skewed answer distribution and producing a large accuracy increase. Our detector instead raises an alarm at $T_{\mathrm{alarm}}=8$, stopping the evolution well before this late artifact is ever reached. It also estimates the change point at $\hat{\nu}=2$, and the output-selection rule returns the artifact from round 1. Thus, the stopping rule prevents the system from continuing into the later degraded regime, while the change-point estimate determines which earlier artifact should be returned. The detector does not explicitly identify reward hacking; it stops because sustained meaningful improvement has already ceased.

\begin{table}[t]
\centering
\small
\setlength{\tabcolsep}{4pt}
\renewcommand{\arraystretch}{0.88}
\caption{Main results for DeepSeek V4 Flash + SkillOpt with a shared detector configuration ($\varepsilon=0.01$, $\delta=0.05$, aGRAPA).}
\label{tab:main}
\begin{tabular}{lccc}
\toprule
Benchmarks & \textbf{SearchQA} & \textbf{GSM8K} & \textbf{OfficeQA} \\
\midrule
\multicolumn{4}{l}{\textit{Stopping behavior}} \\
Validation size $n$ & 200 & 200 & 24 \\
Full-run budget $N$ & 40 & 40 & 16 \\
Alarm round $T_{\mathrm{alarm}}$ & 4 & 4 & 8 \\
Change point $\hat{\nu}$ & 2 & 2 & 5 \\
\midrule
\multicolumn{4}{l}{\textit{Computation cost}} \\
Full-run tokens (M) & 25.85 & 11.86 & 206.90 \\
Tokens to alarm (M) & 2.18 & 1.21 & 106.72 \\
Tokens saved & \textbf{91.6\%} & \textbf{89.8\%} & \textbf{48.4\%} \\
\midrule
\multicolumn{4}{l}{\textit{Unseen-test performance}} \\
Test samples & 1400 & 500 & 172 \\
Baseline accuracy (\%) & 77.21 & 84.00 & 14.53 \\
Full-run selected (\%) & 82.00 & 98.20 & 33.72 \\
Returned accuracy (\%) & 82.43 & 97.40 & 32.56 \\
$\Delta$ Accuracy (pp) & +0.43 & -0.80 & -1.16 \\
95\% CI of $\Delta$ (pp) & $[-1.10,\,1.96]$ & $[-1.91,\,0.31]$ & $[-8.72,\,6.39]$ \\
\midrule
\multicolumn{4}{l}{\textit{Token efficiency}} \\
Full run: Tok / 1\% gain (M) & 5.39 & 0.84 & 10.78 \\
Returned: Tok / 1\% gain (M) & 0.42 & 0.09 & 5.92 \\
Efficiency gain & \textbf{12.9$\times$} & \textbf{9.3$\times$} & \textbf{1.8$\times$} \\
\bottomrule
\end{tabular}
\end{table}

\subsection{Ablation and Sensitivity}
\label{sec:ablation}

We next test whether the stopping rule remains reliable beyond the main experimental setting. Starting from the reference configuration used above, we vary one factor at a time while holding the others fixed. We study robustness across self-evolving systems and underlying models, and then examine sensitivity to the validation-set size and detector parameters.

\textbf{Across self-evolving systems.} We first test whether the stopping rule transfers across different self-evolving algorithms. Holding the underlying model and benchmarks fixed, we replace SkillOpt with GEPA, whose update and selection mechanism differs substantially from SkillOpt. We extract the same type of per-item paired evaluation outcomes from GEPA and apply the detector without changing its configuration. As shown in \Cref{tab:ablation-algo}, the detector raises alarms at similar stages of the evolution process under both systems and still removes a large fraction of the full-run computation. On SearchQA, GEPA stops at round~5 with $\hat{\nu}=2$ and saves 73.8\% of the estimated token cost; on GSM8K, it stops at round~6 with $\hat{\nu}=3$ and saves 65.5\%. Although GEPA runs are shorter overall, the same stopping rule continues to identify a low-improvement regime well before the full budget is exhausted. GEPA token usage is estimated from metric-call counts; details are provided in Appendix~\ref{app:gepa-align}.

\textbf{Across models.} We next test whether the stopping rule remains effective across different underlying models. Holding SkillOpt, the benchmark, and the detector configuration fixed, we compare DeepSeek V4 Flash, Qwen3-32B, and GPT-5.6 Luna. As shown in \Cref{tab:ablation-model,tab:ablation-model-gsm8k}, the detector adapts to different evolution speeds rather than stopping at a fixed round. On GSM8K, DeepSeek V4 Flash and GPT-5.6 Luna both saturate quickly and trigger an alarm at round~4, whereas Qwen3-32B continues improving longer and stops at round~8. A similar pattern appears on SearchQA, where Qwen3-32B stops later than DeepSeek V4 Flash. Despite these differences in stopping time, the detector still removes a large fraction of the full-run computation across all model--benchmark combinations, with token savings ranging from 68.9\% to 92.6\%.

\begin{table}[t]
\centering
\small
\setlength{\tabcolsep}{4pt}
\renewcommand{\arraystretch}{0.88}
\caption{Robustness across self-evolving systems using DeepSeek V4 Flash and the same detector configuration. We compare SkillOpt and GEPA on the same benchmarks, reporting stopping behavior, computation cost, and unseen-test performance. GEPA token counts are estimated from metric-call counts.}
\label{tab:ablation-algo}
\begin{tabular}{lcccc}
\toprule
& \multicolumn{2}{c}{\textbf{SearchQA}} & \multicolumn{2}{c}{\textbf{GSM8K}} \\
\cmidrule(lr){2-3}\cmidrule(lr){4-5}
& SkillOpt & GEPA & SkillOpt & GEPA \\
\midrule

\multicolumn{5}{l}{\textit{Stopping behavior}} \\
Alarm round $T_{\mathrm{alarm}}$
    & 4 & 5 & 4 & 6 \\
Change point $\hat{\nu}$
    & 2 & 2 & 2 & 3 \\

\midrule

\multicolumn{5}{l}{\textit{Computation cost}} \\
Tokens to alarm (M)
    & 2.18 & 2.36 & 1.21 & 1.79 \\
Full-run tokens (M)
    & 25.85 & 8.99 & 11.86 & 5.18 \\
Tokens saved
    & \textbf{91.6\%} & \textbf{73.8\%}
    & \textbf{89.8\%} & \textbf{65.5\%} \\

\midrule

\multicolumn{5}{l}{\textit{Unseen-test performance}} \\
Baseline accuracy (\%)
    & 77.21 & 76.86 & 84.00 & 84.00 \\
Full-run selected (\%)
    & 82.00 & 82.36 & 98.20 & 97.20 \\
Returned accuracy (\%)
    & 82.43 & 82.43 & 97.40 & 97.20 \\
$\Delta$ Accuracy (pp)
    & +0.43 & +0.07 & -0.80 & 0.00 \\
95\% CI of $\Delta$ (pp)
    & $[-1.10,\,1.96]$ & $[-1.17,\,1.32]$ 
    & $[-1.91,\,0.31]$ &$[0.00,\,0.00]$ \\

\bottomrule
\end{tabular}
\end{table}

\begin{table}[t]
\centering
\small
\setlength{\tabcolsep}{4pt}
\renewcommand{\arraystretch}{0.88}
\caption{Robustness across underlying models on SearchQA using SkillOpt with the same detector configuration. The detector adapts its stopping time to different evolution dynamics while reducing computation and retaining comparable unseen-test performance.}
\label{tab:ablation-model}
\begin{tabular}{lccc}
\toprule
& DeepSeek V4 Flash & Qwen3-32B & GPT-5.6 Luna \\
\midrule

\multicolumn{4}{l}{\textit{Stopping behavior}} \\
Validation size $n$
    & 200 & 200 & 200 \\
Full-run budget $N$
    & 40 & 40 & 40 \\
Alarm round $T_{\mathrm{alarm}}$
    & 4 & 12 & 10 \\
Change point $\hat{\nu}$
    & 2 & 9 & 3 \\

\midrule

\multicolumn{4}{l}{\textit{Computation cost}} \\
Tokens to alarm (M)
    & 2.18 & 8.08 & 5.30 \\
Full-run tokens (M)
    & 25.85 & 32.68 & 17.03 \\
Tokens saved
    & \textbf{91.6\%} & \textbf{75.3\%} & \textbf{68.9\%} \\

\midrule

\multicolumn{4}{l}{\textit{Unseen-test performance}} \\
Baseline accuracy (\%)
    & 77.21 & 66.36 & 69.79 \\
Full-run selected (\%)
    & 82.00 & 72.93 & 81.07 \\
Returned accuracy (\%)
    & 82.43 & 73.14 & 80.07 \\
$\Delta$ Accuracy (pp)
    & +0.43 & +0.21 & -1.00 \\
95\% CI (pp)
    & $[-1.10,1.96]$
    & $[-1.45,1.88]$
    & $[-2.08,0.08]$ \\

\bottomrule
\end{tabular}
\end{table}

Additional sensitivity results are provided in Appendix~\ref{app:sensitivity}, including experiments with smaller validation sets and variations in the detector parameters $(\varepsilon,\delta)$. The detector tends to stop later with less validation data, while retaining substantial token savings and comparable unseen-test performance.

\section{Conclusion and Future Work}
\label{sec:discussion}

We introduced a stopping and output-selection procedure for self-evolving systems that uses sequential betting to determine when further evolution is no longer worthwhile and change-point estimation to select an earlier artifact for output. The procedure operates on paired evaluation outcomes already produced during evolution and requires no modification to the underlying system. Across multiple benchmarks, models, and self-evolving systems, it stops well before the fixed budget is exhausted while substantially reducing computation and retaining comparable unseen-test performance.

Future work could extend the localization guarantee to adaptive betting schemes such as aGRAPA, study data-driven ways to calibrate the stopping rule, and generalize the framework to richer evaluation signals beyond binary correctness. It would also be useful to examine how the method behaves under different validation-set sizes, noisier feedback, and more complex self-evolving systems in which proposals are generated and evaluated through longer interaction loops. More broadly, our results suggest that self-evolving systems need not rely on predetermined fixed budgets: sequential evidence can guide when to stop, while change-point estimation can help determine which earlier artifact to return.

\section*{Ethics Statement}

This work studies when to stop self-evolving AI systems in order to avoid unnecessary computation while preserving useful improvements. Our experiments use existing benchmarks and language models and do not involve human subjects or the collection of new personal data. The proposed stopping procedure may help reduce redundant model calls and associated computational cost. As with other evaluation-based stopping methods, its behavior depends on the representativeness of the validation data, and care should be taken when applying it in high-stakes settings.

\section*{Reproducibility Statement}

We provide the full stopping procedure, theoretical assumptions, and proofs in the main text and appendix. Experimental settings include the worthwhile-gain threshold $\varepsilon$, false-alarm control parameter $\delta$, betting constructions, validation-set sizes, model and benchmark configurations, and alarm-time comparison results. We also report both adaptive aGRAPA and fixed-mixture betting schemes, with their comparison provided in the appendix. Additional ablations cover models, evolving systems, validation-set sizes, and detector configurations.

\section*{AI Use Statement}

In this work, large language models (LLMs) were used solely for language refinement during manuscript preparation, including grammar correction, clarity improvement, and the identification of potentially ambiguous wording. All research ideas, methodological decisions, experiments, theoretical derivations, numerical results, and conclusions were independently checked and approved by the authors. The authors take full responsibility for the content of the paper.

\bibliographystyle{plainnat}
\bibliography{references}

% =====================================================================
\appendix

\section{Proofs}
\label{app:proofs}

We prove the two propositions in
\Cref{sec:theory}. Throughout, recall that
\[
Z_t
=
\frac{1}{n}\sum_{i=1}^{n}X_{t,i},
\]
and that the pre-change condition is
\begin{equation}
\label{eq:app-null}
\E[Z_t\mid\Fcal_{t-1}]
\ge
\varepsilon.
\end{equation}
We assume that $X_{t,1},\ldots,X_{t,n}$ are conditionally independent
given $\Fcal_{t-1}$ and that each betting fraction
$\lambda_{s,t}\in[0,\tfrac12]$ is predictable.

\subsection{Validity of Each Restart Wealth Process}

\begin{proof}
Fix a restart time $s$. For $t\ge s$, write the multiplicative
increment of the restart wealth as
\[
L_t^{(s)}
=
\prod_{i=1}^{n}
\left[
1+\lambda_{s,t}
\left(
\varepsilon-X_{t,i}
\right)
\right],
\]
so that
\[
W_t^{(s)}
=
W_{t-1}^{(s)}L_t^{(s)},
\qquad
W_{s-1}^{(s)}=1.
\]

We first verify non-negativity. Since
$X_{t,i}\in\{-1,0,1\}$,
$\varepsilon\in(0,1)$, and
$\lambda_{s,t}\in[0,\tfrac12]$,
\[
1+\lambda_{s,t}(\varepsilon-X_{t,i})
\ge
1+\frac12(\varepsilon-1)
=
\frac{1+\varepsilon}{2}
>
0.
\]
Hence $W_t^{(s)}$ is nonnegative.

Next, define
\[
\mu_{t,i}
=
\E[X_{t,i}\mid\Fcal_{t-1}].
\]
Because $\lambda_{s,t}$ is
$\Fcal_{t-1}$-measurable and the item-level signals are conditionally
independent given $\Fcal_{t-1}$,
\begin{align}
\E[L_t^{(s)}\mid\Fcal_{t-1}]
&=
\prod_{i=1}^{n}
\left[
1+\lambda_{s,t}
\left(
\varepsilon-\mu_{t,i}
\right)
\right].
\end{align}

All factors in the product are positive, so by the arithmetic--geometric
mean inequality,
\begin{align}
\E[L_t^{(s)}\mid\Fcal_{t-1}]
&\le
\left[
\frac{1}{n}
\sum_{i=1}^{n}
\left(
1+\lambda_{s,t}
(\varepsilon-\mu_{t,i})
\right)
\right]^n
\\
&=
\left[
1+\lambda_{s,t}
\left(
\varepsilon-
\frac{1}{n}\sum_{i=1}^{n}\mu_{t,i}
\right)
\right]^n.
\end{align}
Since
\[
\frac{1}{n}\sum_{i=1}^{n}\mu_{t,i}
=
\E[Z_t\mid\Fcal_{t-1}]
\ge
\varepsilon
\]
under the pre-change condition, we obtain
\[
\E[L_t^{(s)}\mid\Fcal_{t-1}]
\le
1.
\]

Therefore,
\begin{align}
\E[W_t^{(s)}\mid\Fcal_{t-1}]
&=
W_{t-1}^{(s)}
\E[L_t^{(s)}\mid\Fcal_{t-1}]
\\
&\le
W_{t-1}^{(s)}.
\end{align}
Thus $\{W_t^{(s)}\}_{t\ge s-1}$ is a nonnegative supermartingale
initialized at one.

Ville's inequality then gives, for any $h>0$,
\[
\Pr\left(
\sup_{t\ge s}W_t^{(s)}\ge h
\right)
\le
\frac{1}{h}.
\]
\end{proof}

\subsection{E-Detector Validity and False-Alarm Run-Length Guarantee}

We next show that combining the restart processes through the maximum
restart wealth produces a valid e-detector. This construction
corresponds to the CUSUM-style e-detector of
\citet{shin2023}.

\begin{proof}
Extend each restart process by setting
$W_t^{(s)}=1$ for $t<s$. By
\Cref{prop:validity}, each
$\{W_t^{(s)}\}_{t\ge0}$ is a nonnegative restart e-process under the
pre-change condition.

Recall that
\[
M_t
=
\max_{1\le s\le t}
W_t^{(s)}.
\]
For any stopping time $\tau$,
\[
M_\tau
\le
\sum_{s=1}^{\tau}
W_\tau^{(s)}.
\]
Taking expectations and writing the random upper limit using
indicators gives
\begin{align}
\E[M_\tau]
&\le
\sum_{s=1}^{\infty}
\E\left[
\ind\{s\le\tau\}
W_\tau^{(s)}
\right].
\end{align}

Because $\{s\le\tau\}\in\Fcal_{s-1}$ and
$W^{(s)}$ is a valid restart e-process, optional stopping gives
\[
\E\left[
W_\tau^{(s)}
\mid
\Fcal_{s-1}
\right]
\le
1.
\]
Therefore,
\begin{align}
\E\left[
\ind\{s\le\tau\}
W_\tau^{(s)}
\right]
&\le
\Pr(s\le\tau).
\end{align}
Summing over $s$,
\begin{align}
\E[M_\tau]
&\le
\sum_{s=1}^{\infty}
\Pr(s\le\tau)
\\
&=
\E[\tau].
\end{align}
Hence $M_t$ satisfies the defining e-detector condition
\[
\E[M_\tau]\le\E[\tau]
\]
for every stopping time $\tau$, and is therefore a valid e-detector
in the sense of \citet{shin2023}.

Now recall the alarm rule
\[
T_{\mathrm{alarm}}
=
\inf\left\{
t\ge1:
M_t\ge\frac{1}{\delta}
\right\}.
\]
If $T_{\mathrm{alarm}}$ is not almost surely finite, then
$\E[T_{\mathrm{alarm}}]=\infty$ and the desired result holds
immediately. Otherwise,
\[
M_{T_{\mathrm{alarm}}}
\ge
\frac{1}{\delta}.
\]
Applying the e-detector inequality at
$\tau=T_{\mathrm{alarm}}$ gives
\[
\frac{1}{\delta}
\le
\E[M_{T_{\mathrm{alarm}}}]
\le
\E[T_{\mathrm{alarm}}],
\]
and therefore
\[
\E[T_{\mathrm{alarm}}]
\ge
\frac{1}{\delta}.
\]
Thus the stopping rule has average run length to false alarm of at
least $1/\delta$ whenever the pre-change condition continues to hold.
\end{proof}

\subsection{Population Characterization of the Change Point}
\label{app:changepoint-proof}
\begin{proof}[Proof of the population conclusion in Proposition~\ref{prop:changepoint}]
For this result, consider the fixed betting fraction $\lambda$ used in
\Cref{prop:changepoint}. Taking logarithms of the restart wealth gives
\[
\log W_T^{(s)}
=
\sum_{u=s}^{T}
\sum_{i=1}^{n}
\log\left(
1+\lambda(\varepsilon-X_{u,i})
\right).
\]
Let $g_u=n\E[Y_u]$, with $Y_u$ defined in
\Cref{eq:normalized-log-growth}. By the tower property, $g_u<0$ before
$\nu$ and $g_u>0$ from $\nu$ onward. Thus,
\begin{equation}
\label{eq:app-logwealth}
\E[\log W_T^{(s)}]
=
\sum_{u=s}^{T}g_u.
\end{equation}

Suppose first that $s<\nu$. Then
\begin{align}
\E[\log W_T^{(s)}]
&=
\sum_{u=s}^{\nu-1}g_u
+
\sum_{u=\nu}^{T}g_u
\\
&<
\sum_{u=\nu}^{T}g_u
=
\E[\log W_T^{(\nu)}],
\end{align}
because $g_u<0$ for all $u<\nu$.

Now suppose that $s>\nu$. Then
\begin{align}
\E[\log W_T^{(s)}]
&=
\sum_{u=s}^{T}g_u
\\
&=
\sum_{u=\nu}^{T}g_u
-
\sum_{u=\nu}^{s-1}g_u
\\
&<
\sum_{u=\nu}^{T}g_u
=
\E[\log W_T^{(\nu)}],
\end{align}
because $g_u>0$ for all $u\ge\nu$.

Thus every $s\neq\nu$ has strictly smaller expected log restart wealth
than $s=\nu$, and hence
\[
\nu
=
\operatorname*{arg\,max}_{1\le s\le T}
\E[\log W_T^{(s)}].
\]

This is a population-level characterization for the common fixed-$\lambda$ setting considered here. For the corresponding realized restart wealth, the logarithm is strictly increasing, so
\[
\operatorname*{arg\,max}_{s}
W_T^{(s)}
=
\operatorname*{arg\,max}_{s}
\log W_T^{(s)}.
\]
Thus realized wealth and realized log wealth induce the same restart selection in this setting.
\end{proof}

\subsection{Change-Point Localization Error Bound}
\label{app:changepoint-error-proof}

\begin{proof}[Proof of the localization bound in Proposition~\ref{prop:changepoint-error}]
Recall that
\[
Y_t=\frac{1}{n}\sum_{i=1}^n V_{t,i},
\qquad
V_{t,i}=\log\left(1+\lambda(\varepsilon-X_{t,i})\right),
\]
and therefore
\[
\log W_T^{(s)}=n\sum_{u=s}^T Y_u.
\]
By assumption, conditional on $\Fcal_{t-1}$, the centered variables $V_{t,i}-\E[V_{t,i}\mid\Fcal_{t-1}]$ are independent and $\sigma^2$-sub-Gaussian. Hence,
\[
Y_t-\E[Y_t\mid\Fcal_{t-1}]
\]
is conditionally $\sigma^2/n$-sub-Gaussian.

First consider a starting round $s<\nu$. Since the processes initialized at $s$ and $\nu$ share all factors from round $\nu$ through $T$,
\[
\log W_T^{(s)}-\log W_T^{(\nu)}
=
n\sum_{u=s}^{\nu-1}Y_u.
\]
Let $k=\nu-s$. Then
\[
W_T^{(s)}\ge W_T^{(\nu)}
\quad\Longleftrightarrow\quad
\sum_{u=s}^{\nu-1}Y_u\ge0.
\]
Since $\E[Y_u\mid\Fcal_{u-1}]\le-\kappa$ for $u<\nu$,
\[
\sum_{u=s}^{\nu-1}\E[Y_u\mid\Fcal_{u-1}]
\le-k\kappa.
\]
Thus, defining
\[
D_u=Y_u-\E[Y_u\mid\Fcal_{u-1}],
\]
the event $\sum_{u=s}^{\nu-1}Y_u\ge0$ implies
\[
\sum_{u=s}^{\nu-1}D_u\ge k\kappa.
\]
Because each $D_u$ is conditionally $\sigma^2/n$-sub-Gaussian, the sum over $k$ rounds is conditionally $k\sigma^2/n$-sub-Gaussian. Therefore,
\[
\Pr\left(W_T^{(s)}\ge W_T^{(\nu)}\right)
\le
\exp\left(-\frac{nk\kappa^2}{2\sigma^2}\right).
\]

Now consider $s>\nu$. In this case,
\[
\log W_T^{(s)}-\log W_T^{(\nu)}
=
-n\sum_{u=\nu}^{s-1}Y_u.
\]
Let $k=s-\nu$. Then
\[
W_T^{(s)}\ge W_T^{(\nu)}
\quad\Longleftrightarrow\quad
\sum_{u=\nu}^{s-1}Y_u\le0.
\]
Since $\E[Y_u\mid\Fcal_{u-1}]\ge\kappa$ for $u\ge\nu$,
\[
\sum_{u=\nu}^{s-1}\E[Y_u\mid\Fcal_{u-1}]
\ge k\kappa.
\]
Hence the event $\sum_{u=\nu}^{s-1}Y_u\le0$ implies
\[
\sum_{u=\nu}^{s-1}D_u\le-k\kappa.
\]
Applying the lower-tail sub-Gaussian bound gives
\[
\Pr\left(W_T^{(s)}\ge W_T^{(\nu)}\right)
\le
\exp\left(-\frac{nk\kappa^2}{2\sigma^2}\right).
\]

If $|\hat{\nu}_T-\nu|\ge m$, then there exists some starting round $s$ with $|s-\nu|\ge m$ such that $W_T^{(s)}\ge W_T^{(\nu)}$. Applying a union bound over starting points on both sides of $\nu$ gives
\[
\Pr\left(|\hat{\nu}_T-\nu|\ge m\right)
\le
2\sum_{k=m}^{\infty}
\exp\left(-\frac{nk\kappa^2}{2\sigma^2}\right).
\]
Evaluating the geometric series yields
\[
\Pr\left(|\hat{\nu}_T-\nu|\ge m\right)
\le
\frac{
2\exp\left(-\frac{nm\kappa^2}{2\sigma^2}\right)
}{
1-\exp\left(-\frac{n\kappa^2}{2\sigma^2}\right)
}.
\]

To obtain the equivalent high-probability form, let $\alpha\in(0,1)$ and choose $m$ such that
\[
\frac{
2\exp\left(-\frac{nm\kappa^2}{2\sigma^2}\right)
}{
1-\exp\left(-\frac{n\kappa^2}{2\sigma^2}\right)
}
\le\alpha.
\]
Solving for $m$ gives
\[
m
\ge
\frac{2\sigma^2}{n\kappa^2}
\log\left(
\frac{2}
{\alpha\left(1-\exp\left(-\frac{n\kappa^2}{2\sigma^2}\right)\right)}
\right).
\]
Let $r_\alpha$ denote the right-hand side and choose $m=\lceil r_\alpha\rceil$. Since $|\hat\nu_T-\nu|$ is integer-valued, the events $\{|\hat\nu_T-\nu|\ge r_\alpha\}$ and $\{|\hat\nu_T-\nu|\ge m\}$ coincide. Therefore, with probability at least $1-\alpha$,
\[
|\hat{\nu}_T-\nu|
<
\frac{2\sigma^2}{n\kappa^2}
\log\left(
\frac{2}
{\alpha\left(1-\exp\left(-\frac{n\kappa^2}{2\sigma^2}\right)\right)}
\right),
\]
which proves the stated result.
\end{proof}

\section{Betting fraction and detectability floor}
\label{app:betting}

By \Cref{prop:validity}, any predictable $\lambda_{s,t}\in[0,\tfrac12]$ yields a valid restart wealth process. We instantiate the betting rule in two ways: adaptive aGRAPA and a fixed mixture.

\paragraph{Fixed mixture.}
Fix a finite grid
$\Lambda=\{\lambda^{(1)},\dots,\lambda^{(m)}\}\subset[0,\tfrac12]$
and nonnegative weights $w_j$ summing to $1$; we use uniform weights over
$\Lambda=\{0.1,0.2,0.4\}$. For each start $s$, run one wealth process per
grid value and average them:
\[
\overline{\Wealth}_t^{(s)}
=
\sum_j w_j\,\Wealth_t^{(s,\lambda^{(j)})}.
\]
For the fixed-mixture betting scheme, this averaged wealth serves as the
restart wealth $W_t^{(s)}$ used by the stopping and output-selection
procedure.

\begin{proposition}[The mixture is valid]
\label{prop:mixture}
If each $\Wealth_t^{(s,\lambda^{(j)})}$ is a nonnegative supermartingale under $H_0$, then so is $\overline{\Wealth}_t^{(s)}$, and the false-alarm bound of \Cref{prop:validity} applies with the same threshold $1/\delta$.
\end{proposition}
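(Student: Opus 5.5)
The plan is to verify the supermartingale property of the averaged wealth directly from linearity of conditional expectation. Then we apply Ville's inequality exactly as in the proof of \Cref{prop:validity}.

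Fix a start $s$. Each component process $\Wealth_t^{(s,\lambda^{(j)})}$ is nonnegative with $\Wealth_{s-1}^{(s,\lambda^{(j)})}=1$. Since the weights $w_j$ are nonnegative and sum to one, $\overline{\Wealth}_t^{(s)}$ is nonnegative and also starts at one. It is adapted, because it is a finite linear combination of adapted processes. Integrability follows from the same finite combination.

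The key step is the supermartingale inequality. By linearity of conditional expectation and the hypothesis that each component is a supermartingale under $H_0$,
\[
\E\bigl[\overline{\Wealth}_t^{(s)}\mid\Fcal_{t-1}\bigr]=\sum_j w_j\,\E\bigl[\Wealth_t^{(s,\lambda^{(j)})}\mid\Fcal_{t-1}\bigr]\le\sum_j w_j\,\Wealth_{t-1}^{(s,\lambda^{(j)})}=\overline{\Wealth}_{t-1}^{(s)}.
\]
This holds only because the weights are fixed constants, not data-dependent. The only point needing care is to confirm that every component is a supermartingale with respect to the same filtration $\{\Fcal_t\}$. This holds since each fixed $\lambda^{(j)}\in[0,\tfrac12]$ is trivially predictable, so \Cref{prop:validity} applies to each component under the pre-change condition.

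Ville's inequality for the nonnegative supermartingale $\overline{\Wealth}^{(s)}$ started at one then gives
\[
\Pr\Bigl(\sup_{t\ge s}\overline{\Wealth}_t^{(s)}\ge 1/\delta\Bigr)\le\delta.
\]
For the detector-level statement, the e-detector argument in the proof of \Cref{prop:arl} used only that each restart process is a nonnegative supermartingale started at one. We can therefore repeat it verbatim with $W_t^{(s)}=\overline{\Wealth}_t^{(s)}$ to obtain $\E[M_\tau]\le\E[\tau]$, and hence $\E[T_{\mathrm{alarm}}]\ge1/\delta$ with the same threshold.

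I expect no real obstacle. The subtle point, if any, is noting that the mixing weights must not depend on the data; a data-dependent reweighting would break the linearity step.
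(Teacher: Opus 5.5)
Your proposal is correct and follows the paper's proof: nonnegativity and initialization at one from the convex weights, then linearity of conditional expectation to carry the componentwise supermartingale inequality through the mixture. You also spell out what the paper leaves implicit in ``the false-alarm bound applies'': you apply Ville's inequality to $\overline{\Wealth}_t^{(s)}$ and rerun the e-detector argument with $M_t$ built from the mixed restart processes.
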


\begin{proof}
Non-negativity is immediate. By linearity of conditional expectation,
\[
\E\bigl[\overline{\Wealth}_t^{(s)} \mid \Fcal_{t-1}\bigr]
= \sum_j w_j\, \E\bigl[\Wealth_t^{(s,\lambda^{(j)})} \mid \Fcal_{t-1}\bigr]
\le \sum_j w_j\, \Wealth_{t-1}^{(s,\lambda^{(j)})}
= \overline{\Wealth}_{t-1}^{(s)},
\]
and $\overline{\Wealth}_{s-1}^{(s)} = \sum_j w_j = 1$. A convex combination of nonnegative supermartingales is a nonnegative supermartingale.
\end{proof}

The mixture requires no online estimation and remains a valid supermartingale regardless of sample size.

\paragraph{Growth-optimal plug-in (aGRAPA).}
A Kelly gambler maximizes $\E[\log(1 + \lambda G_{t,i})]$; a second-order expansion yields the approximate maximizer $\lambda^\star = \mu/(\sigma^2 + \mu^2)$ with $\mu = \E[G_{t,i}]$ and $\sigma^2 = \operatorname{Var}(G_{t,i})$. Replacing the moments by predictable plug-ins over $[s, t-1]$ and truncating to $[0,\tfrac12]$ gives, for a gambler started at $s$,
\[
\lambda_{s,t} = \min\!\left(
\frac{\max(\hat\mu_{s,t-1},\, 0)}{\hat\sigma^2_{s,t-1} + \hat\mu_{s,t-1}^2},\ \tfrac12 \right),
\quad
\hat\mu_{s,t-1} = \eps - \bar Z_{s,t-1}, \quad
\hat\sigma^2_{s,t-1} = \bar q_{s,t-1} - \bar Z_{s,t-1}^2,
\]
where $\bar Z_{s,t-1} = \frac{1}{t-s}\sum_{u=s}^{t-1} Z_u$ and $\bar q_{s,t-1} = \frac{1}{t-s}\sum_{u=s}^{t-1}\frac{n_{01}(u)+n_{10}(u)}{n}$ is the mean discordant rate; on the empty window set $\lambda_{s,s} = \lambda_0$ for a fixed $\lambda_0 \in (0,\tfrac12]$. We set the ratio to zero when $\hat\sigma^2_{s,t-1}+\hat\mu_{s,t-1}^2=0$. The moment identities follow from $G = \eps - X$ with $X \in \{-1,0,+1\}$: $\E[G] = \eps - \E[X]$ gives $\hat\mu = \eps - \bar Z$, while $X^2 = \ind\{X \ne 0\}$ gives $\operatorname{Var}(G) = \operatorname{Var}(X) = \bar q - \bar Z^2$, with $\eps$ cancelling.

\begin{proposition}[aGRAPA is valid]
\label{prop:agrapa}
The fraction $\lambda_{s,t}$ above is predictable and lies in $[0,\tfrac12]$, hence the aGRAPA wealth process is a nonnegative supermartingale under $H_0$ by \Cref{prop:validity}.
\end{proposition}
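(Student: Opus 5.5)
The plan is to check the two ingredients that \Cref{prop:validity} needs: predictability of $\lambda_{s,t}$ and membership in $[0,\tfrac12]$. Once both are verified, the supermartingale conclusion follows directly from \Cref{prop:validity}, which already covers any predictable fraction in that range that is fixed across items within a round.

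\emph{Predictability.} For $t=s$, the fraction $\lambda_{s,s}=\lambda_0$ is a deterministic constant and hence trivially $\Fcal_{s-1}$-measurable. For $t>s$, the plug-ins $\bar Z_{s,t-1}$ and $\bar q_{s,t-1}$ depend only on the round summaries $Z_u$, $n_{01}(u)$, $n_{10}(u)$ for $u\in[s,t-1]$. These are all determined by the paired outcomes observed through round $t-1$, so they are $\Fcal_{t-1}$-measurable. The map
\[
(\bar Z,\bar q)\mapsto \min\!\left(\frac{\max(\eps-\bar Z,0)}{\bar q-\bar Z^2+(\eps-\bar Z)^2},\tfrac12\right),
\]
extended by $0$ where the denominator vanishes, is Borel measurable: it is continuous away from that zero set, and it is a constant on that set, which is closed. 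Hence $\lambda_{s,t}$ is $\Fcal_{t-1}$-measurable. It uses no item-level information from round $t$, so it is the same for all items within the round, as \Cref{prop:validity} requires.

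\emph{Range.} The outer $\min(\cdot,\tfrac12)$ gives the upper bound $\tfrac12$ directly. For the lower bound, the numerator $\max(\hat\mu,0)$ is nonnegative, so the only thing to check is that the denominator is nonnegative. This is the step that needs an argument. Since $X_{u,i}^2=\ind\{X_{u,i}\neq 0\}$, we have $\frac{1}{n}\sum_i X_{u,i}^2=\frac{n_{01}(u)+n_{10}(u)}{n}$, so $\bar q_{s,t-1}$ is the pooled empirical second moment of all $X_{u,i}$ in the window. Jensen's (or Cauchy--Schwarz) inequality gives $\bar q_{s,t-1}\ge\bar Z_{s,t-1}^2$, that is, $\hat\sigma^2_{s,t-1}\ge0$. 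Then $\hat\sigma^2+\hat\mu^2\ge0$, and by the convention this sum is replaced by zero output when it equals zero. In every case the ratio is therefore a nonnegative real number, and after truncation $\lambda_{s,t}\in[0,\tfrac12]$. For $t=s$ we have $\lambda_0\in(0,\tfrac12]$ by assumption.

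Invoking \Cref{prop:validity} then shows that under $H_0^{(s)}$ the aGRAPA wealth $W^{(s)}_t$ is a nonnegative supermartingale started at one, and the Ville bound and e-detector conclusions carry over. The main obstacle is only the denominator sign check above. If one worried about whether the two averages are over the same index set, the fallback is to note that both are uniform averages over $u\in[s,t-1]$ and $i\in[n]$, so $\bar q-\bar Z^2$ is exactly the empirical variance of the pooled $X_{u,i}$.
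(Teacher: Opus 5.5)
Your proof is correct and follows the same route as the paper: predictability because the window $[s,t-1]$ excludes round $t$, the range from the $\max(\cdot,0)$ and $\min(\cdot,\tfrac12)$ truncations, and then an appeal to \Cref{prop:validity}. Your explicit check that $\hat\sigma^2_{s,t-1}=\bar q_{s,t-1}-\bar Z_{s,t-1}^2\ge 0$, via the pooled second moment, fills in a step the paper leaves implicit; without it, the truncations alone would not rule out a negative $\lambda_{s,t}$.
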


\begin{proof}
The window $[s,t-1]$ excludes round $t$, so $\hat\mu_{s,t-1}$ and $\hat\sigma^2_{s,t-1}$ are $\Fcal_{t-1}$-measurable and $\lambda_{s,t}$ is predictable. The inner $\max(\cdot, 0)$ and outer $\min(\cdot, \tfrac12)$ force $\lambda_{s,t} \in [0,\tfrac12]$, with $\lambda_{s,t} = 0$ when the denominator vanishes. Validity follows from \Cref{prop:validity}.
\end{proof}

\paragraph{Detectability floor.}

\begin{proposition}[Approximate plateau length required to fire]
\label{prop:floor}
On a pure plateau with $Z_u=0$, so every item pays $G_{u,i}=\eps$, a single gambler with constant fraction $\lambda$ multiplies its wealth by $(1+\lambda\eps)^n$ per round. Starting from wealth of order $1$ at the onset of the plateau, the approximate number of consecutive plateau rounds required to reach $1/\delta$ is
\[
T_{\mathrm{need}}
\approx
\frac{\log(1/\delta)}
{n\log(1+\lambda\eps)}.
\]
For the mixture over a grid $\Lambda$, this becomes
\[
T_{\mathrm{need}}
\approx
\frac{\log(1/\delta)+\log|\Lambda|}
{n\log(1+\lambda_{\max}\eps)},
\qquad
\lambda_{\max}=\max\Lambda.
\]
\end{proposition}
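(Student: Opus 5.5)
The argument is a direct computation of log-wealth on a plateau followed by solving for the threshold crossing. I take the single-gambler case first and then reduce the mixture case to it.

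\textbf{Single gambler.} On a pure plateau round $u$ with $Z_u=0$, the proposition treats every item as paying $G_{u,i}=\varepsilon-X_{u,i}=\varepsilon$, i.e.\ $X_{u,i}=0$ for all $i$. Then each factor in \Cref{eq:wealth} equals $1+\lambda\varepsilon$, so the round-$u$ multiplicative increment $L_u^{(s)}$ from the proof of \Cref{prop:validity} is exactly $(1+\lambda\varepsilon)^n$. Suppose the gambler enters the plateau with wealth $W_0=O(1)$; in the most relevant case this is a fresh restart at the plateau onset, so $W_0=1$ exactly. After $k$ plateau rounds,
\[
\log W = \log W_0 + kn\log(1+\lambda\varepsilon).
\]
Setting this equal to $\log(1/\delta)$ and absorbing the $O(1)$ term $\log W_0$ gives
\[
k\approx \frac{\log(1/\delta)}{n\log(1+\lambda\varepsilon)},
\]
which is the first formula. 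The ``$\approx$'' also covers the ceiling needed because $k$ is an integer.

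\textbf{Mixture.} For the fixed mixture, $\overline{W}^{(s)}_t=\sum_j w_j(1+\lambda^{(j)}\varepsilon)^{nk}$ after $k$ plateau rounds. I bound this from both sides:
\[
\frac{1}{|\Lambda|}(1+\lambda_{\max}\varepsilon)^{nk}\;\le\;\overline{W}^{(s)}_t\;\le\;(1+\lambda_{\max}\varepsilon)^{nk}.
\]
The lower bound keeps only the largest-$\lambda$ term, which has uniform weight $1/|\Lambda|$. The upper bound uses that every $\lambda^{(j)}\le\lambda_{\max}$ and the weights sum to one. The lower bound is sufficient for the crossing: requiring $(1+\lambda_{\max}\varepsilon)^{nk}/|\Lambda|\ge 1/\delta$ gives
\[
k \ge \frac{\log(1/\delta)+\log|\Lambda|}{n\log(1+\lambda_{\max}\varepsilon)}.
\]
The upper bound shows that crossing cannot occur before roughly $\log(1/\delta)/(n\log(1+\lambda_{\max}\varepsilon))$ rounds. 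The two estimates differ by at most $\log|\Lambda|/(n\log(1+\lambda_{\max}\varepsilon))$ rounds, so the stated formula is accurate up to this additive gap.

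\textbf{Main obstacle.} The difficulty is interpretive rather than technical: the result is stated only approximately. I need to state precisely what is approximated, namely the $O(1)$ initial wealth, integer rounding, and the dominance of the $\lambda_{\max}$ component in the mixture. I also need to note that because $M_t$ is a maximum over restarts, the fresh restart at the plateau onset realizes this bound. Earlier restarts carry prior gains or losses, and these only shift the constant.
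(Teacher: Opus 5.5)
Your proposal is correct and takes the same route as the paper. You compute the per-round factor $(1+\lambda\varepsilon)^n$, solve $kn\log(1+\lambda\varepsilon)=\log(1/\delta)$, and for the mixture keep only the $\lambda_{\max}$ component with weight $1/|\Lambda|$, which is exactly the paper's ``dominates up to a factor $|\Lambda|$'' step; your matching upper bound $\overline{W}^{(s)}_t\le(1+\lambda_{\max}\varepsilon)^{nk}$ goes slightly beyond the paper and shows the stated formula is accurate to within $\log|\Lambda|/(n\log(1+\lambda_{\max}\varepsilon))$ rounds.
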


\begin{proof}
On a plateau round every item pays $\eps$, so the per-round wealth factor of a constant-$\lambda$ gambler is $(1 + \lambda\eps)^n$ and after $T$ rounds the log-wealth grows by $T\,n\log(1 + \lambda\eps)$. Setting this equal to $\log(1/\delta)$ and solving for $T$ gives the first display; for the mixture, the fastest-growing grid value dominates up to a factor $|\Lambda|$, adding $\log|\Lambda|$.
\end{proof}

\begin{corollary}[Resolution scales with the validation set]
\label{cor:double}
For small $\lambda\eps$, $\log(1 + \lambda\eps) \approx \lambda\eps$, so $T_{\mathrm{need}} \approx (\log(1/\delta) + \log|\Lambda|)/(n\lambda_{\max}\eps)$. For a fixed target plateau length the smallest resolvable $\eps$ scales like $1/n$: doubling $\Dval$ halves the worthwhile gain the detector can resolve.
\end{corollary}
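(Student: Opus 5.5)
The statement in question is the final corollary (Corollary~\ref{cor:double}), so the plan is to derive it directly from the approximation in Proposition~\ref{prop:floor}.

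\textbf{Step 1: linearize the mixture formula.} Start from
\[
T_{\mathrm{need}}\approx \frac{\log(1/\delta)+\log|\Lambda|}{n\log(1+\lambda_{\max}\eps)}.
\]
Use the Taylor expansion $\log(1+x)=x-x^2/2+O(x^3)$ with $x=\lambda_{\max}\eps$. This gives $\log(1+\lambda_{\max}\eps)=\lambda_{\max}\eps\,(1+O(\lambda_{\max}\eps))$, so the denominator becomes $n\lambda_{\max}\eps$ up to a relative error of order $\lambda_{\max}\eps$.

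\textbf{Step 2: quantify the error.} The relative error can be made explicit with $x/(1+x)\le\log(1+x)\le x$. This yields
\[
\frac{C}{n\lambda_{\max}\eps}\;\le\; T_{\mathrm{need}} \;\le\; \frac{C(1+\lambda_{\max}\eps)}{n\lambda_{\max}\eps},
\qquad C=\log(1/\delta)+\log|\Lambda|.
\]
The two sides agree up to the factor $1+\lambda_{\max}\eps$, and this factor is at most $1+\eps/2$ because $\lambda_{\max}\le\tfrac12$.

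\textbf{Step 3: invert for the resolvable $\eps$.} Fix a target plateau length $T_0$ and solve $T_{\mathrm{need}}(\eps)=T_0$ for $\eps$. Under the linearization the solution is
\[
\eps_{\min}(n)\approx \frac{C}{n\lambda_{\max}T_0},
\]
which is proportional to $1/n$. Hence $\eps_{\min}(2n)\approx\tfrac12\,\eps_{\min}(n)$. For an exact version, note that $n\log(1+\lambda_{\max}\eps)$ is strictly increasing in $\eps$, so the inverse is well defined: $\eps_{\min}(n)=(e^{C/(nT_0)}-1)/\lambda_{\max}$. This expression behaves like $C/(nT_0\lambda_{\max})$ whenever $C/(nT_0)$ is small.

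\textbf{Main obstacle.} The corollary inherits the approximation $\approx$ from Proposition~\ref{prop:floor}. That proposition treats the mixture as dominated by its best grid element up to a factor $|\Lambda|$, and starts the plateau from wealth of order one. So the scaling claim should be read as asymptotic, valid for small $\lambda_{\max}\eps$, and not as an exact identity. I would state it with the explicit sandwich from Step 2 so that the $1/n$ scaling holds up to a $1+O(\eps)$ factor.
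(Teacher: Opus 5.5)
Your proposal is correct and follows the same route as the paper. The paper simply linearizes $\log(1+\lambda_{\max}\eps)\approx\lambda_{\max}\eps$ in the mixture formula of Proposition~\ref{prop:floor} and reads off the $1/n$ scaling of the resolvable $\eps$ at fixed plateau length; your sandwich $x/(1+x)\le\log(1+x)\le x$ and your exact inverse $\eps_{\min}(n)=(e^{C/(nT_0)}-1)/\lambda_{\max}$ sharpen that heuristic without changing it (they even give $\eps_{\min}(2n)/\eps_{\min}(n)=1/(1+e^{C/(2nT_0)})$, slightly below $\tfrac12$).
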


When $n$ or $\eps$ is small relative to the available plateau length, the detector correctly stays silent. This is not a failure: validity (\Cref{prop:validity}) holds regardless of $n$, and only power degrades.

\section{Paired Unseen-Test Comparison}
\label{app:paired-test}

We compare the artifact returned by our stopping procedure and the artifact selected by the full run on the same unseen-test examples. Because each example produces a paired binary correctness outcome, the observations can be summarized by the $2\times2$ table

\[
\begin{array}{c|cc}
& \text{Full incorrect} & \text{Full correct} \\
\hline
\text{Returned incorrect} & n_{00} & n_{01} \\
\text{Returned correct}   & n_{10} & n_{11}
\end{array}.
\]

Here, $n_{01}$ counts examples answered incorrectly by the returned artifact but correctly by the full-run artifact, while $n_{10}$ counts examples on which the reverse occurs.

\paragraph{Paired accuracy difference.}
For $N=n_{00}+n_{01}+n_{10}+n_{11}$ paired examples, the difference in test accuracy is

\[
\widehat{\Delta}_{\mathrm{acc}}
=
\widehat{\mathrm{Acc}}_{\mathrm{returned}}
-
\widehat{\mathrm{Acc}}_{\mathrm{full}}
=
\frac{n_{10}-n_{01}}{N}.
\]

Thus, positive values favor the returned artifact and negative values favor the full-run artifact. Let

\[
\hat q=\frac{n_{01}+n_{10}}{N}
\]

denote the fraction of discordant pairs. Since the per-example paired difference takes values in $\{-1,0,+1\}$, an estimated standard error is

\[
\widehat{\mathrm{SE}}
=
\sqrt{
\frac{
\hat q-\widehat{\Delta}_{\mathrm{acc}}^{\,2}
}{N}
}.
\]

We report the corresponding normal-approximation 95\% confidence interval

\[
\widehat{\Delta}_{\mathrm{acc}}
\pm
1.96\,\widehat{\mathrm{SE}}.
\]

The paired accuracy difference quantifies both the magnitude and uncertainty of the performance change. We report it together with its 95\% confidence interval in \Cref{tab:main}.
\begin{table}[t]
\centering
\caption{Robustness across underlying models on GSM8K using SkillOpt with the same detector configuration.}

\label{tab:ablation-model-gsm8k}

\setlength{\tabcolsep}{4pt}
\begin{tabular}{lccc}
\toprule
& DeepSeek & Qwen & Luna \\
\midrule

\multicolumn{4}{l}{\textit{Stopping behavior}} \\
Validation size $n$
    & 200 & 200 & 200 \\
Full-run budget $N$
    & 40 & 40 & 40 \\
Alarm round $T_{\mathrm{alarm}}$
    & 4 & 8 & 4 \\
Change point $\hat{\nu}$
    & 2 & 2 & 2 \\

\midrule

\multicolumn{4}{l}{\textit{Computation cost}} \\
Tokens to alarm (M)
    & 1.21 & 2.79 & 0.84 \\
Full-run tokens (M)
    & 11.86 & 18.04 & 11.33 \\
Tokens saved
    & \textbf{89.8\%} & \textbf{84.5\%} & \textbf{92.6\%} \\

\midrule

\multicolumn{4}{l}{\textit{Unseen-test performance}} \\
Baseline accuracy (\%)
    & 84.00 & 78.80 & 24.40 \\
Full-run selected (\%)
    & 98.20 & 95.80 & 95.80 \\
Returned accuracy (\%)
    & 97.40 & 94.20 & 94.60 \\
$\Delta$ Accuracy (pp)
    & -0.80 & -1.60 & -1.20 \\
95\% CI (pp)
    & $[-1.91,0.31]$
    & $[-3.35,0.15]$
    & $[-2.86,0.46]$ \\

\bottomrule
\end{tabular}
\end{table}

\section{GEPA alignment and token estimation}
\label{app:gepa-align}

\paragraph{Seed alignment.}
SkillOpt begins from a fixed skill-document template with an empty skill body, whereas GEPA begins from an initial prompt. To make the two systems' starting points comparable, we align the GEPA seed to SkillOpt's template: the seed prompt carries the same task framing, with an empty-skill placeholder in place of SkillOpt's initial skill body. This ensures that round $t{=}0$ denotes a comparable ``no learned content'' state in both systems, so that alarm times and $\hat\nu$ are measured from the same origin.

\paragraph{Per-item outcomes.}
GEPA's gate selects along a Pareto front over minibatch subscores rather than scoring on a single full-validation pass. We read the per-item outcomes $b_{t,i}, c_{t,i}$ that the detector consumes directly from GEPA's saved validation-subscore matrix, which records each candidate's per-item correctness on the shared validation split.

\paragraph{Token estimation.}
SkillOpt logs token usage per agent call, so its token counts are exact. GEPA's public logs record the number of metric calls (LM inferences over the validation set) made during discovery, but not a per-call token breakdown. We therefore estimate GEPA token usage as
\[
\text{tokens} \;\approx\; (\text{number of metric calls}) \times (\text{average tokens per call}),
\]
with the average tokens per call derived from the matched SkillOpt--DeepSeek rollout on the same benchmark: $2249$ tokens/call on SearchQA and $1354$ tokens/call on GSM8K. Because a metric call is an LM inference over the same validation set in both systems, the per-call cost is comparable across systems, so this estimate is accurate up to differences in prompt length between the two seeds. The GEPA token figures in \Cref{tab:ablation-algo} should be read as estimates on this basis; the alarm and $\hat\nu$ values, which depend only on the per-item outcomes, are exact.

\section{Alarm-time comparison}
\label{app:full-alarms}

\Cref{tab:full-alarms} compares the alarm time and change-point estimate under the two betting schemes used in our experiments: the fixed mixture and aGRAPA. Across all matched settings, the two schemes produce the same $\hat\nu$, while the alarm time differs by at most a few rounds. The main text reports the aGRAPA results for conciseness.

\begin{table}[t]
\centering
\small
\renewcommand{\arraystretch}{1.15}
\caption{Alarm\,/\,$\hat\nu$ for a single configuration ($\varepsilon=0.01$, $\delta=0.05$) across SkillOpt and GEPA settings. ``mix'' denotes the fixed mixture with $\Lambda=\{0.1,0.2,0.4\}$, and ``aGRAPA'' denotes the adaptive growth-optimal plug-in.}
\label{tab:full-alarms}
\begin{tabular}{lllcc}
\toprule
System & Benchmark & Model & mix & aGRAPA \\
\midrule
\multirow{8}{*}{SkillOpt}
 & SearchQA ($n{=}200$) & DeepSeek V4 Flash & 5/2  & 4/2  \\
 & SearchQA ($n{=}200$) & Qwen3-32B         & 12/9 & 12/9 \\
 & SearchQA ($n{=}200$) & GPT-5.6 Luna      & 11/3 & 10/3 \\
 & SearchQA ($n{=}100$) & DeepSeek V4 Flash & 10/4 & 6/4  \\
 & GSM8K ($n{=}200$)    & DeepSeek V4 Flash & 5/2  & 4/2  \\
 & GSM8K ($n{=}200$)    & Qwen3-32B         & 9/2  & 8/2  \\
 & GSM8K ($n{=}200$)    & GPT-5.6 Luna      & 5/2  & 4/2  \\
 & GSM8K ($n{=}100$)    & DeepSeek V4 Flash & 8/2  & 5/2  \\
\midrule
\multirow{2}{*}{GEPA}
 & SearchQA & DeepSeek V4 Flash & 5/2 & 5/2 \\
 & GSM8K    & DeepSeek V4 Flash & 6/3 & 6/3 \\
\bottomrule
\end{tabular}
\end{table}

\section{Sensitivity Analysis}
\label{app:sensitivity}
\paragraph{Sensitivity to validation-set size.} We examine how the detector behaves when less validation data are available. Holding the self-evolving system, underlying model, and detector configuration fixed, we reduce the validation set size from $n=200$ to $n=100$. As shown in \Cref{tab:ablation-n}, the detector generally stops later with fewer validation examples, since each round provides less evidence. On SearchQA, the alarm moves from round~4 to round~6, while on GSM8K it moves from round~4 to round~5. Despite the smaller validation sets, the method still saves more than 85\% of the total token cost, and the paired unseen-test accuracy differences remain small, with 95\% confidence intervals containing zero.
\begin{table}[t]
\centering
\small
\caption{Sensitivity to validation-set size using SkillOpt with DeepSeek V4 Flash and the same detector configuration. Smaller validation sets provide less evidence per round and therefore lead to slightly later stopping, while substantial computation is still saved and unseen-test performance remains comparable.}
\label{tab:ablation-n}
\begin{tabular}{lcccc}
\toprule
& \multicolumn{2}{c}{\textbf{SearchQA}} & \multicolumn{2}{c}{\textbf{GSM8K}} \\
\cmidrule(lr){2-3}\cmidrule(lr){4-5}
& $n=200$ & $n=100$ & $n=200$ & $n=100$ \\
\midrule

\multicolumn{5}{l}{\textit{Stopping behavior}} \\[1mm]
Full-run budget $N$
    & 40 & 40 & 40 & 40 \\
Alarm round $T_{\mathrm{alarm}}$
    & 4 & 6 & 4 & 5 \\
Change point $\hat{\nu}$
    & 2 & 4 & 2 & 2 \\

\midrule

\multicolumn{5}{l}{\textit{Computation cost}} \\[1mm]
Tokens to alarm (M)
    & 2.18 & 2.26 & 1.21 & 0.83 \\
Full-run tokens (M)
    & 25.85 & 16.10 & 11.86 & 6.52 \\
Tokens saved
    & \textbf{91.6\%} & \textbf{85.9\%}
    & \textbf{89.8\%} & \textbf{87.3\%} \\

\midrule

\multicolumn{5}{l}{\textit{Unseen-test performance}} \\[1mm]
Baseline accuracy (\%)
    & 77.21 & 77.21 & 84.00 & 84.00 \\
Full-run selected (\%)
    & 82.00 & 82.43 & 98.20 & 97.20 \\
Returned accuracy (\%)
    & 82.43 & 81.79 & 97.40 & 97.60 \\
$\Delta$ Accuracy (pp)
    & +0.43 & -0.64 & -0.80 & +0.40 \\
95\% CI of $\Delta$ (pp)
    & $[-1.10,\,1.96]$
    & $[-1.79,\,0.50]$
    & $[-1.91,\,0.31]$
    & $[-0.71,\,1.51]$ \\

\bottomrule
\end{tabular}
\end{table}

\paragraph{Sensitivity to detector parameters.} We further examine sensitivity to the improvement threshold $\varepsilon$ and false-alarm control parameter $\delta$. \Cref{tab:sensitivity-params} reports the alarm round $T_{\mathrm{alarm}}$ and estimated change point $\hat{\nu}$ across a grid of parameter values. Larger values of $\varepsilon$ or $\delta$ generally lead to earlier stopping, while the estimated change point remains relatively stable across a broad range of settings.
\begin{table}[t]
\centering
\footnotesize
\setlength{\tabcolsep}{4pt}
\renewcommand{\arraystretch}{0.9}
\caption{Sensitivity of the alarm round $T_{\mathrm{alarm}}$ and estimated change point $\hat{\nu}$ to $(\varepsilon,\delta)$. Each entry reports $T_{\mathrm{alarm}}/\hat{\nu}$; ``--'' indicates that no alarm is raised within the evolution horizon.}
\label{tab:sensitivity-params}
\begin{minipage}[t]{0.48\textwidth}
\centering
\textbf{SearchQA} ($n=200$)

\begin{tabular}{lccc}
\toprule
$\varepsilon\backslash\delta$ & 0.01 & 0.05 & 0.10 \\
\midrule
0.002 & 10/2 & 4/2 & 4/2 \\
0.005 & 5/2 & 4/2 & 4/2 \\
0.01 & 4/2 & 4/2 & 3/2 \\
0.02 & 4/2 & 3/2 & 3/2 \\
0.05 & 3/2 & 3/2 & 3/2 \\
\bottomrule
\end{tabular}
\end{minipage}
\hfill
\begin{minipage}[t]{0.48\textwidth}
\centering
\textbf{GSM8K} ($n=200$)

\begin{tabular}{lccc}
\toprule
$\varepsilon\backslash\delta$ & 0.01 & 0.05 & 0.10 \\
\midrule
0.002 & 8/2 & 7/2 & 5/2 \\
0.005 & 7/2 & 5/2 & 5/2 \\
0.01 & 5/2 & 4/2 & 4/2 \\
0.02 & 4/2 & 3/2 & 3/2 \\
0.05 & 3/2 & 3/2 & 3/2 \\
\bottomrule
\end{tabular}
\end{minipage}

\vspace{6pt}

\begin{minipage}[t]{0.48\textwidth}
\centering
\textbf{LiveMath} ($n=18$)

\begin{tabular}{lccc}
\toprule
$\varepsilon\backslash\delta$ & 0.01 & 0.05 & 0.10 \\
\midrule
0.002 & -- & 12/2 & 8/2 \\
0.005 & -- & 12/2 & 7/2 \\
0.01 & -- & 8/2 & 7/2 \\
0.02 & 12/2 & 7/2 & 6/2 \\
0.05 & 8/2 & 6/2 & 5/2 \\
\bottomrule
\end{tabular}
\end{minipage}
\hfill
\begin{minipage}[t]{0.48\textwidth}
\centering
\textbf{OfficeQA} ($n=24$)

\begin{tabular}{lccc}
\toprule
$\varepsilon\backslash\delta$ & 0.01 & 0.05 & 0.10 \\
\midrule
0.002 & 9/5 & 8/5 & 7/5 \\
0.005 & 9/5 & 8/5 & 7/5 \\
0.01 & 9/5 & 8/5 & 7/5 \\
0.02 & 9/5 & 8/5 & 7/5 \\
0.05 & 8/4 & 6/4 & 6/4 \\
\bottomrule
\end{tabular}
\end{minipage}
\end{table}

\section{Additional Related Work}
\label{app:related-work}
\paragraph{Stopping rules for training and inference.}
Early stopping has long been used in model training to reduce computation and mitigate overfitting, typically by terminating optimization when validation performance ceases to improve according to a patience rule or threshold~\citep{prechelt2012,caruana2001}. Subsequent work has studied more principled, data-dependent stopping criteria for iterative learning procedures~\citep{yao2007,raskutti2014}. Related stopping mechanisms have also been explored in modern LLM systems, primarily at inference time. Semantic patience terminates iterative multi-agent refinement when successive outputs stabilize~\citep{shp}; statistical stopping rules have been proposed to limit overthinking within a reasoning trajectory~\citep{xie2026}; hidden-state diagnostics have been used to detect premature commitment during inference~\citep{mehta2026}; and always-valid e-processes have been applied to determine when a generate--verify procedure has accumulated sufficient evidence to release an output~\citep{cho2026release}. These approaches study when to terminate training or inference under their respective signals and objectives. In contrast, our setting concerns a self-evolving propose--evaluate--commit loop over persistent artifacts, where stopping must be determined from repeated paired validation outcomes and is coupled with the additional problem of selecting which previously retained artifact to return.

\clearpage
\section{Main Evolution Trajectories}
\label{app:main-trajectories}

\begin{figure}[htbp]
\centering
\includegraphics[width=\textwidth]{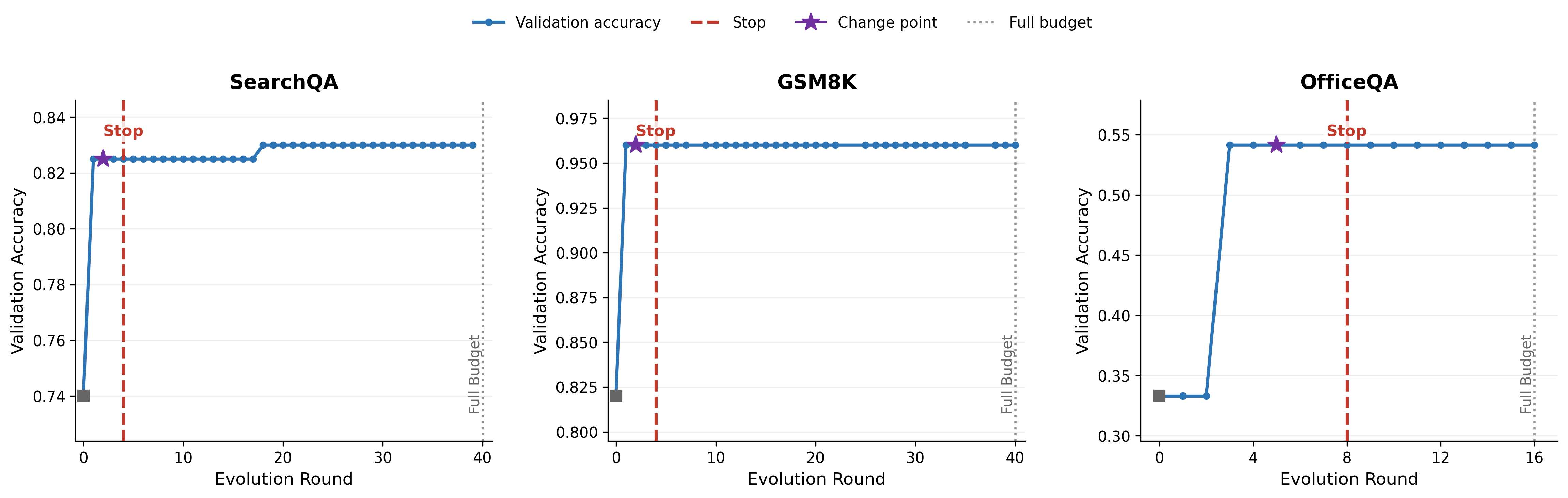}
\caption{Evolution trajectories under SkillOpt with DeepSeek V4 Flash on SearchQA, GSM8K, and OfficeQA. The red dashed line marks $T_{\mathrm{alarm}}$, the purple star marks $\hat{\nu}$, and the gray dotted line denotes the full evolution budget.}
\label{fig:main-trajectories}
\end{figure}

As shown in \Cref{fig:main-trajectories} and \Cref{tab:main}, the detector raises an alarm well before the full-run budget is exhausted on all three benchmarks. On SearchQA and GSM8K, it stops at round~4 out of 40, reducing token usage from 25.85M to 2.18M and from 11.86M to 1.21M, respectively. This corresponds to token savings of 91.6\% and 89.8\%. OfficeQA is substantially more expensive per evolution round, yet the detector still stops at round~8 out of 16 and reduces total token usage by 48.4\%. These results indicate that fixed iteration budgets can spend a substantial fraction of computation after the main performance gains have already occurred.

The estimated change points are $\hat\nu=2$ for SearchQA and GSM8K and $\hat\nu=5$ for OfficeQA. They precede the respective alarm rounds because the detector needs additional observations to accumulate sufficient stopping evidence.

\end{document}